\newtheorem{theorem}{Theorem}
\newtheorem{lemma}[theorem]{Lemma}
\newtheorem{corollary}[theorem]{Corollary}
\theoremstyle{definition}
\newtheorem{definition}{Definition}
\newtheorem{assumption}{Assumption}
\theoremstyle{definition}
\newtheorem{remark}{Remark}
\newcommand{\nbb}{\mathbb{N}}
\newcommand{\bw}{\mathbf{w}}
\newcommand{\xcal}{\mathcal{X}}
\newcommand{\wcal}{\mathcal{W}}
\newcommand{\zcal}{\mathcal{Z}}
\newcommand{\ycal}{\mathcal{Y}}
\newcommand{\ebb}{\mathbb{E}}
\newcommand{\be}{\mathbf{e}}
\newcommand{\rbb}{\mathbb{R}}
\title{Stability and Generalization for Randomized Coordinate Descent\footnote{To appear in IJCAI 2021}}
\author{
Puyu Wang$^1$\and
Liang Wu$^2$\And
Yunwen Lei$^{3}$
\affiliations
$^1$School of Mathematics, Northwest University, Xi'an 710127, China\\
$^2$Center of Statistical Research, School of Statistics, Southwestern University of Finance and Economics, Chengdu 611130, China\\
$^3$School of Computer Science, University of Birmingham, Birmingham B15 2TT, UK\\
\emails
 wangpuyu@stumail.nwu.edu.cn,
wuliang@swufe.edu.cn,
y.lei@bham.ac.uk
}
\begin{document}

\maketitle

\begin{abstract}
  Randomized coordinate descent (RCD) is a popular optimization algorithm with wide applications in solving various machine learning problems, which motivates a lot of theoretical analysis on its convergence behavior. As a comparison, there is no work studying how the models trained by RCD would generalize to test examples. In this paper, we initialize the generalization analysis of RCD by leveraging the powerful tool of algorithmic stability. We establish argument stability bounds of RCD for both convex and strongly convex objectives, from which we develop optimal generalization bounds by showing how to early-stop the algorithm to tradeoff the estimation and optimization. Our analysis shows that RCD enjoys better stability as compared to stochastic gradient descent.
\end{abstract}


\section{Introduction}

Randomized coordinate descent (RCD) is a popular method for solving large-scale optimization problems which are ubiquitous in the big-data era ~\citep{nesterov2012efficiency,richtarik2014iteration,richtarik2016distributed}. As an iterative algorithm, it iteratively updates a single randomly chosen coordinate along the negative direction of the derivative while keeping the other coordinates fixed. Due to its ease of implementation and high efficiency, RCD has found wide applications in various areas such as compressed sensing, network problems and optimization~\citep{richtarik2014iteration}. In particular, the conceptual and algorithmic simplicity makes it especially useful for large-scale problems for which even the simplest full-dimensional vector operations are very expensive~\citep{nesterov2012efficiency}.

The popularity of RCD motivates a lot of theoretical analysis to understand its empirical behavior. Specifically, iteration complexities of RCD are well studied in the literature under different settings (e.g., convex/strongly convex~\citep{nesterov2012efficiency}, smooth/nonsmooth cases~\citep{richtarik2014iteration,richtarik2016distributed,lu2015complexity}) for different variants (e.g., distributed RCD~\citep{richtarik2016distributed},  accelerated RCD~\citep{nesterov2012efficiency,ren2017distributed,gu2018accelerated,li2020complexity,chen2016accelerated}, RCD for primal-dual problems~\citep{qu2016sdna} and RCD for saddle problems~\citep{zhu2016stochastic}). These discussions concern how the empirical risks of the models trained by RCD would decay along the optimization process. As a comparison, there is little analysis on how these models would behave on testing examples, which is what really matters in machine learning. Actually, if the models are very complicated, it is very likely that the models would admit a small empirical risk or even interpolate the training examples but meanwhile suffer from a large test error. This discrepancy between training and testing, as referred to as overfitting, is a fundamental problem in machine learning~\citep{bousquet2002stability}. The existing convergence rate analysis of RCD is not enough to fully understand why the models trained by RCD have a good prediction performance in real applications. In particular, it is not clear how the optimization and statistical behavior of RCD would change along the optimization process, which is useful for  designing efficient models in practice. For example, generalization analysis provides a principled guideline on how to stop the algorithm appropriately for a best generalization. 

In this paper, we aim to bridge the generalization and optimization of RCD by leveraging the celebrated concept of algorithmic stability. We establish stability bounds of RCD as measured by several concepts, including $\ell_1$-argument stability, $\ell_2$-argument stability and uniform stability. Under standard assumptions on smoothness, Lipschitz continuity and convexity of objective functions, we show clearly how the stability and the optimization error would behave along the learning process. This suggests a principled way to early-stop the algorithm to get a best generalization behavior. We consider convex, strongly convex and nonconvex objective functions. In the convex and strongly convex cases, we develop minimax optimal generalization bounds of the order $O(1/\sqrt{n})$ and $O(1/n)$ respectively, where $n$ is the sample size. Our analysis not only suggests that RCD has a better stability than stochastic gradient descent (SGD), but also is able to exploit a low noise condition to get an optimistic bound $O(1/n)$ in the convex case. Finally, we develop generalization bounds with high probability which are useful to understand the robustness and variation of the training algorithm~\citep{feldman2019high}.



\section{Related Work\label{sec:work}}
\subsection{Randomized Coordinate Descent}

RCD was widely used to solve large-scale optimization problems in machine learning, including linear SVMs~\citep{chang2008coordinate}, $\ell_1$-regularized models for sparse learning~\citep{shalev2009stochastic} and low-rank matrix learning~\citep{hu2019low}.
The convergence rate of RCD and its accelerated variant were studied in the seminal work~\citep{nesterov2012efficiency}, where the advantage of RCD over deterministic algorithms is clearly illustrated. These results were extended to structure optimization problems where the objective function consists of a smooth data-fitting term and a nonsmooth regularizer~\citep{richtarik2014iteration,lu2015complexity}. RCD was also adapted to distributed data analysis\citep{richtarik2016distributed,sun2017asynchronous,xiao2019dscovr}, primal-dual optimization ~\citep{qu2016sdna} and  privacy-preserving problems~\citep{damaskinos2020differentially}. All these discussions consider the convergence rate of optimization errors for RCD. As a comparison, we are interested in the generalization behavior of models trained by RCD, which is the ultimate goal in machine learning.

\subsection{Stability and Generalization}
We now review the related work on algorithmic stability and its application on generalization analysis. The framework of algorithmic stability was established in a seminal paper~\citep{bousquet2002stability}, where the important uniform stability was introduced. This algorithmic stability was extended to study randomized algorithms in \citet{elisseeff2005stability}. Other than uniform stability, several other stability measures including hypothesis stability~\citep{bousquet2002stability}, on-average stability~\citep{shalev2010learnability} and argument stability~\citep{liu2017algorithmic} have been introduced in statistical learning theory, whose connection to learnability has been established~\citep{mukherjee2006learning,shalev2010learnability}. The uniform stability of stochastic gradient descent (SGD) was established for learning with (strongly) convex, smooth and Lipschitz loss functions~\citep{hardt2016train}. This motivates the recent work of studying  generalization of stochastic optimization algorithms via several stability~\citep{meng2017generalization,charles2018stability,kuzborskij2018data,yin2018gradient,yuan2019stagewise,lei2020fine,bassily2020stability,lei2020sharper,lei2021sharper,wang2021differentially,yang2021stability}. For example, an on-average model stability~\citep{lei2020fine} has been proposed to remove the smoothness assumption or Lipschitz continuity assumption in \citet{hardt2016train}. Recently, elegant concentration inequalities have been developed to get high-probability bounds via uniform stability~\citep{feldman2019high,bousquet2019sharper}. To our best knowledge, the algorithmic stability of RCD has not been studied yet, which is the topic of this paper.

\section{Problem Formulation\label{sec:formulation}}

Let $\rho$ be a probability measure defined over a sample space $\zcal=\xcal\times\ycal$, where $\xcal$ is an input space and $\ycal\subset\rbb$ is an output space. We aim to build a parametric model $h_{\bw}:\xcal\mapsto\rbb$, where $\bw$ is the model parameter which belongs to the parameter space $ \wcal\subseteq\rbb^d$. The performance of the model $h_{\bw}$ on a single example $z$ can be measured by a nonnegative loss function $f(\bw;z)$. The quality of a model can be quantified by a population risk $F(\bw)=\ebb_z[f(\bw;z)]$, where $\ebb_z$ denotes the expectation w.r.t. $z$. We wish to approximate the best model $\bw^*\in\arg\min_{\bw\in\wcal}F(\bw)$. However, the probability measure is often unknown and we only have access to a training sample $S=\{z_1,z_2,\ldots,z_n\}$ drawn independently from $\rho$. The empirical behavior of
$h_{\bw}$ on $S$ can be measured by a empirical risk $F_S(\bw)=\frac{1}{n}\sum_{i=1}^{n}f(\bw;z_i)$.

\noindent\textbf{Notations.}
For any $\mathbf{x}\in \rbb^d$, we denote the norm $\|\mathbf{x}\|_p=\big(\sum_{i=1}^{d}|\mathbf{x}_i|^p\big)^{1/p}$ for $p\geq1$.
For any $m\in\nbb$, we use the notation $[m]:=\{1,\ldots,m\}$. And we use the notation $B=O(\tilde{B})$ if there exists a constant $c_0>0$ such that $B\le c_0 \tilde{B}$, and use $B\asymp \widetilde{B}$ if there exist constants $c_1,c_2>0$ such that $c_1\widetilde{B}<B\leq c_2\widetilde{B}$. We say $g:\wcal\mapsto\rbb$ is $L$-smooth if $\|\nabla g(\bw)-\nabla g(\bw')\|_2\leq L\|\bw-\bw'\|_2$ for all $\bw,\bw'\in\wcal$.



We apply a randomized algorithm $A$ to the sample $S$ and get an output model $A(S)\in\wcal$.
We are interested in studying the {\em excess generalization error} $F(A(S))-F(\bw^*)$. Since $\ebb[F_S(\bw^*)]=F(\bw^*)$, we can decompose the excess generalization error by
\begin{align}
\ebb_{S,A}\big[F(A(S))\!-&\!F(\bw^*)\big]\!=\!\ebb_{S,A}\big[F(A(S))\!-\!F_S(A(S))\big]\notag\\
&+\ebb_{S,A}\big[F_S(A(S))-F_S(\bw^*)\big].\label{decomposition}
\end{align}
We refer to the first term $\ebb_{S,A}\big[F(A(S))-F_S(A(S))\big]$ as the estimation error, and the second term $\ebb_{S,A}\big[F_S(A(S))-F_S(\bw^*)\big]$ as the optimization error. A standard approach to control estimation error is to study the algorithmic stability of the algorithm $A$, i.e., how the model would change if we change the training sample by a single example. There are several variants of stability measures including  the uniform stability, hypothesis stability, on-average stability and argument stability~\citep{bousquet2002stability,hardt2016train,elisseeff2005stability}, among which the uniform stability is the most popular one.
\begin{definition}[Uniform Stability\label{def:unif-stab}]
  A randomized algorithm $A$ is $\epsilon$-uniformly stable if for all training datasets $S,\widetilde{S}\in\zcal^n$ that differ by at most one example, we have
  \begin{equation*}
  \sup_z\big[f(A(S);z)-f(A(\widetilde{S});z)\big]\leq\epsilon.
  \end{equation*}
\end{definition}
In this paper we consider the on-average argument/model stability~\citep{lei2020fine}, an advantage of which is that it can imply better generalization bounds without a Lipschitz continuity assumption on loss functions.
\begin{definition}[On-average Argument Stability\label{def:aver-stab}]
  Let $S=\{z_1,\ldots,z_n\}$ and $S'=\{z'_1,\ldots,z'_n\}$ be drawn independently from $\rho$. For any $i=1,\ldots,n$, define
  $S^{(i)}=\{z_1,\ldots,z_{i-1},z_i',z_{i+1},\ldots,z_n\}$ as the set formed from $S$ by replacing $z_i$ with $z_i'$.
  We say a randomized algorithm $A$ is $\ell_1$ on-average argument $\epsilon$-stable if
  $
  \ebb_{S,S',A}\big[\frac{1}{n}\sum_{i=1}^{n}\|A(S)-A(S^{(i)})\|_2\big]\leq\epsilon,
  $
  and $\ell_2$ on-average argument $\epsilon$-stable if
  $
  \ebb_{S,S',A}\big[\frac{1}{n}\sum_{i=1}^{n}\|A(S)-A(S^{(i)})\|_2^2\big]\leq\epsilon^2.
  $
\end{definition}

Lemma \ref{thm:gen-model-stab} \citep{lei2020fine} gives a connection between on-average argument stability and estimation errors. Assumption \ref{ass:lipschitz} holds for popular loss functions including logistic loss and Huber loss.
\begin{assumption}\label{ass:lipschitz}
  Let $G_1,G_2>0$. Assume for all $\bw\in\wcal$ and $z\in\zcal$,
  $ \|\nabla f(\bw;z)\|_1\leq G_1 $ and  $\|\nabla f(\bw;z)\|_2\leq G_2.$
\end{assumption}
\begin{lemma}[Generalization via Argument Stability\label{thm:gen-model-stab}]
Let $S,S'$ and $S^{(i)}$ be constructed as Definition \ref{def:aver-stab}.
\begin{enumerate}[(a)]
  \item If Assumption \ref{ass:lipschitz} holds, then
  \begin{multline*}
  \big|\ebb_{S,A}\big[F_S(A(S))-F(A(S))\big]\big|\leq \\
  \frac{G_2}{n}\ebb_{S,S',A}\Big[\sum_{i=1}^{n}\|A(S)-A(S^{(i)})\|_2\Big].
  \end{multline*}
  \item If for any $z$, the function $\bw\mapsto f(\bw;z)$ is nonnegative and $L$-smooth, then for any $\gamma>0$ we have
  \begin{multline*}
    \ebb_{S,A}\big[F(A(S))-F_S(A(S))\big]  \leq \frac{1}{\gamma}\ebb_{S,A}\big[F_S(A(S))\big]\\
    +\frac{L(1+\gamma)}{2n}\sum_{i=1}^{n}\ebb_{S,S',A}\big[\|A(S^{(i)})-A(S)\|_2^2\big].
  \end{multline*}
\end{enumerate}
\end{lemma}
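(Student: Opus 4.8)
The plan is to prove both parts through a single symmetrization (renaming) argument that rewrites the estimation error as an average of loss differences between the models $A(S)$ and $A(S^{(i)})$, after which part (a) follows from Lipschitz continuity and part (b) from smoothness together with the self-bounding property. First I would set up the key identity. Writing $F(A(S))=\ebb_z[f(A(S);z)]$ and using that the fresh point $z$ has the same distribution as $z_i'$ and is independent of $S$, one gets $\ebb_{S,A}[F(A(S))]=\frac{1}{n}\sum_{i=1}^n\ebb_{S,S',A}[f(A(S);z_i')]$. Since $S$ and $S^{(i)}$ are identically distributed with the roles of $z_i$ and $z_i'$ interchanged, relabeling gives $\ebb_{S,A}[f(A(S);z_i)]=\ebb_{S,S',A}[f(A(S^{(i)});z_i')]$, and therefore $\ebb_{S,A}[F_S(A(S))]=\frac{1}{n}\sum_{i=1}^n\ebb_{S,S',A}[f(A(S^{(i)});z_i')]$. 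Subtracting these yields
\[
\ebb_{S,A}[F_S(A(S))-F(A(S))]=\frac{1}{n}\sum_{i=1}^n\ebb_{S,S',A}\big[f(A(S^{(i)});z_i')-f(A(S);z_i')\big].
\]

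For part (a), Assumption \ref{ass:lipschitz} makes $\bw\mapsto f(\bw;z)$ be $G_2$-Lipschitz in the $\ell_2$ norm, so each summand is at most $G_2\|A(S^{(i)})-A(S)\|_2$ in absolute value; bounding the absolute value of the identity above by the sum of absolute values of the summands immediately gives the claim.

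For part (b), I would instead expand $f(A(S);z_i')-f(A(S^{(i)});z_i')$ via the quadratic upper bound from $L$-smoothness, taken at $A(S)$ around $A(S^{(i)})$:
\[
f(A(S);z_i')-f(A(S^{(i)});z_i')\leq\langle\nabla f(A(S^{(i)});z_i'),A(S)-A(S^{(i)})\rangle+\tfrac{L}{2}\|A(S)-A(S^{(i)})\|_2^2.
\]
Cauchy--Schwarz and Young's inequality $ab\leq\frac{\lambda}{2}a^2+\frac{1}{2\lambda}b^2$ control the inner product by $\frac{\lambda}{2}\|\nabla f(A(S^{(i)});z_i')\|_2^2+\frac{1}{2\lambda}\|A(S)-A(S^{(i)})\|_2^2$, and the self-bounding property $\|\nabla f(\bw;z)\|_2^2\leq 2Lf(\bw;z)$ of nonnegative $L$-smooth functions turns the gradient term into $\lambda L f(A(S^{(i)});z_i')$. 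Taking expectations, summing, and reusing the renaming identity $\frac{1}{n}\sum_i\ebb_{S,S',A}[f(A(S^{(i)});z_i')]=\ebb_{S,A}[F_S(A(S))]$ produces the bound with coefficient $\lambda L$ on the empirical risk and $\frac{1}{2\lambda}+\frac{L}{2}$ on the squared stability term; setting $\lambda=1/(\gamma L)$ makes these exactly $1/\gamma$ and $L(1+\gamma)/2$.

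The step I expect to require the most care is the symmetrization identity itself---correctly pairing $z_i'$ with $z_i$ and $S^{(i)}$ with $S$ under exchangeability of the draws---and, in part (b), selecting the Young's parameter $\lambda$ so that the self-bounding constant $2L$ reconciles cleanly with the target coefficients $1/\gamma$ and $L(1+\gamma)/2$.
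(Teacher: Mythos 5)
Your proposal is correct: the symmetrization identity $\ebb_{S,A}[F_S(A(S))-F(A(S))]=\frac{1}{n}\sum_{i=1}^n\ebb_{S,S',A}[f(A(S^{(i)});z_i')-f(A(S);z_i')]$ holds by exchangeability of $z_i$ and $z_i'$, and your choice $\lambda=1/(\gamma L)$ yields exactly the coefficients $1/\gamma$ and $L(1+\gamma)/2$. The paper does not reprove this lemma but imports it from \citet{lei2020fine}, and your argument is essentially the proof given there (symmetrization, then Lipschitz continuity for part (a) and smoothness plus the self-bounding property for part (b)), so no further comparison is needed.
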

In this paper, we consider the specific RCD method widely used in large-scale learning problems. Let $\bw_1\in\wcal$ be the initial point.
At the $t$-th iteration it first  randomly selects a single coordinate $i_t\in[d]$, and then performs the update along the $i_t$-th coordinate as~\citep{nesterov2012efficiency}
\begin{equation}\label{RCD}
\bw_{t+1}=\bw_t-\eta_t\nabla_{i_t}F_S(\bw_t)\be_{i_t},
\end{equation}
where $\nabla_{i}g$ denotes the derivative of $g$ w.r.t. the $i$-th coordinate and $\be_i$ is a vector in $\rbb^d$ with the $i$-th coordinate being $1$ and other coordinates being $0$. Here {$\{\eta_t\}$} is a nonnegative stepsize sequence. It is clear that RCD sequentially updates a randomly selected coordinate while keeping others fixed. In this paper,  we consider the update of only a single coordinate per iteration. Our discussions can be readily extended to randomized block coordinate descent where the coordinates are partitioned into blocks, and each block of coordinates is updated per iteration~\citep{nesterov2012efficiency}.

\section{Stability of RCD\label{sec:stability}}
In this section, we present our stability bounds of RCD.
To this aim,  we first introduce several standard assumptions~\citep{nesterov2012efficiency}. The first assumption is the convexity of the empirical risk. Note we do not require the convexity of each loss function, which is used in the stability analysis of SGD~\citep{hardt2016train,kuzborskij2018data}. 
\begin{assumption}\label{ass:convex}
  For any training dataset set $S$, $F_S$ is convex.
\end{assumption}

Our second assumption is the coordinate-wise smoothness of the empirical risk.
\begin{definition}\label{def:coo-smooth}
  We say a differentiable function $g:\wcal\mapsto\rbb$ has coordinate-wise Lipschitz continuous gradients with parameter $\widetilde{L}>0$ if the following inequality holds for all $\alpha\in\rbb,\bw\in\wcal,i\in[d]$
  \[
  g(\bw+\alpha\be_i)\leq g(\bw)+\alpha\nabla_ig(\bw)+\widetilde{L}\alpha^2/2.
  \]
\end{definition}
\begin{assumption}\label{ass:smooth}
  For any training dataset $S$, $F_S$ is $L$-smooth and has coordinate-wise Lipschitz continuous gradients with parameter $\widetilde{L}>0$.
\end{assumption}


\noindent\textbf{Convex case}. Under these assumptions, we establish stability bounds of RCD. Part (a) of Theorem \ref{thm:stab-bound-rcd} considers the $\ell_1$ on-average argument stability, while Part (b) considers the $\ell_2$ on-average argument stability. The proof is given in Section~\ref{sec:proof}.
\begin{theorem}\label{thm:stab-bound-rcd}
Let Assumptions \ref{ass:convex}, \ref{ass:smooth} hold.
Let $\{\bw_t\}, \{\bw_t^{(i)}\}$ be given by \eqref{RCD} with $\eta_t\leq2/\widetilde{L}$ based on $S$ and $S^{(i)}$, respectively.
\begin{enumerate}[(a)]
  \item If Assumption \ref{ass:lipschitz} holds, then 
  \begin{equation}\label{stab-bound-rcd-l1}
  \frac{1}{n}\sum_{i=1}^{n}\ebb_{S,S',A}\Big[\|\bw_{t+1}-\bw_{t+1}^{(i)}\|_2\Big] \leq \frac{2G_1}{nd}\sum_{k=1}^{t}\eta_k.
  \end{equation}
  \item For any $p>0$ the $\ell_2$ on-average argument stability can be bounded by
    \begin{multline}\label{stab-bound-rcd}
    \frac{1}{n}\sum_{i=1}^{n}\ebb_A\big[\|\bw_{t+1}-\bw_{t+1}^{(i)}\|_2^2\big]
    \leq \frac{4L(1+1/p)}{n^2d}\times\\
    \sum_{j=1}^{t}\big(1+p\big)^{t-j}\eta_j^2\ebb_A\big[F_S(\bw_j)+F_{S'}(\bw_j)\big].
\end{multline}
\end{enumerate}

\end{theorem}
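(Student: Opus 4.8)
The plan is to run the two RCD trajectories $\{\bw_t\}$ and $\{\bw_t^{(i)}\}$ under a common coupling, selecting the \emph{same} coordinate $i_t$ at every step, and to control the one-step growth of the perturbation $\bw_t-\bw_t^{(i)}$. Since $F_S$ and $F_{S^{(i)}}$ differ only through the single swapped pair $(z_i,z_i')$, I would split the coordinate-$i_t$ update difference into two pieces,
\[
\bw_{t+1}-\bw_{t+1}^{(i)}=\underbrace{\Big(\bw_t-\bw_t^{(i)}-\eta_t\big(\nabla_{i_t}F_{S^{(i)}}(\bw_t)-\nabla_{i_t}F_{S^{(i)}}(\bw_t^{(i)})\big)\be_{i_t}\Big)}_{\text{same-objective part}}-\underbrace{\eta_t B'_{i_t}\be_{i_t}}_{\text{perturbation part}},
\]
where the perturbation coefficient, evaluated at the common point $\bw_t$, is $B'_{i_t}=\tfrac1n\big(\nabla_{i_t}f(\bw_t;z_i)-\nabla_{i_t}f(\bw_t;z_i')\big)$. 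The first piece is a single-coordinate gradient step of the \emph{same} objective $F_{S^{(i)}}$ applied to the two iterates; the second isolates the effect of the swapped example. Keeping the perturbation at $\bw_t$ rather than at $\bw_t^{(i)}$ is deliberate: after averaging over $i$, the per-example losses will assemble into $F_S(\bw_j)+F_{S'}(\bw_j)$.

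The key technical step, and the one I expect to be the main obstacle, is a mean-square \emph{non-expansiveness} lemma for the same-objective part: taking expectation over the uniform choice of $i_t$, I would show $\ebb_{i_t}\big[\|\bw_t-\bw_t^{(i)}-\eta_t(\nabla_{i_t}F_{S^{(i)}}(\bw_t)-\nabla_{i_t}F_{S^{(i)}}(\bw_t^{(i)}))\be_{i_t}\|_2^2\big]\le\|\bw_t-\bw_t^{(i)}\|_2^2$. Averaging over the coordinate turns the single-coordinate terms into the full inner product $\langle \bw_t-\bw_t^{(i)},\nabla F_{S^{(i)}}(\bw_t)-\nabla F_{S^{(i)}}(\bw_t^{(i)})\rangle$ and the full squared gradient difference, and convexity (Assumption~\ref{ass:convex}) together with smoothness (Assumption~\ref{ass:smooth}) supplies a co-coercivity relation to cancel the quadratic term against the cross term. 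The delicate point is that this full-gradient co-coercivity naively calls for $\eta_t\le 2/L$, whereas the theorem asserts the weaker restriction $\eta_t\le 2/\widetilde L$; the crux is to exploit the coordinate-wise smoothness $\widetilde L$ in handling the per-coordinate update so that the contraction already holds under $\eta_t\le 2/\widetilde L$. This reconciliation of the single-coordinate step with the full-gradient monotonicity is where I anticipate the real work.

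For Part~(a) I would apply the triangle inequality to the decomposition, take expectation over $i_t$, and bound the same-objective part by Jensen's inequality together with the mean-square non-expansiveness (so it contributes at most $\|\bw_t-\bw_t^{(i)}\|_2$). The perturbation part contributes $\ebb_{i_t}[\eta_t|B'_{i_t}|]=\tfrac{\eta_t}{dn}\|\nabla f(\bw_t;z_i)-\nabla f(\bw_t;z_i')\|_1\le \tfrac{2G_1\eta_t}{dn}$ by the $\ell_1$-Lipschitz bound of Assumption~\ref{ass:lipschitz}. This yields the additive recursion $\ebb[\|\bw_{t+1}-\bw_{t+1}^{(i)}\|_2]\le \ebb[\|\bw_t-\bw_t^{(i)}\|_2]+2G_1\eta_t/(dn)$, which I would unroll from the common start $\bw_1=\bw_1^{(i)}$ and average over $i$ to obtain \eqref{stab-bound-rcd-l1}.

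For Part~(b) I would instead split the squared norm with Young's inequality at parameter $p$, producing a factor $(1+p)$ on the same-objective part and $(1+1/p)$ on the perturbation part. The non-expansiveness lemma bounds the former by $(1+p)\|\bw_t-\bw_t^{(i)}\|_2^2$, while for the latter I would invoke the self-bounding property of nonnegative $L$-smooth losses, $\|\nabla f(\bw;z)\|_2^2\le 2Lf(\bw;z)$, giving $\ebb_{i_t}[(B'_{i_t})^2]\le \tfrac{4L}{dn^2}\big(f(\bw_t;z_i)+f(\bw_t;z_i')\big)$. This produces the geometric recursion $\ebb_A[\|\bw_{t+1}-\bw_{t+1}^{(i)}\|_2^2]\le (1+p)\ebb_A[\|\bw_t-\bw_t^{(i)}\|_2^2]+\tfrac{4L(1+1/p)\eta_t^2}{dn^2}\ebb_A[f(\bw_t;z_i)+f(\bw_t;z_i')]$. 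Unrolling from $\bw_1=\bw_1^{(i)}$ generates the weights $(1+p)^{t-j}$, and finally averaging $\tfrac1n\sum_i$ turns $\tfrac1n\sum_i f(\bw_j;z_i)$ and $\tfrac1n\sum_i f(\bw_j;z_i')$ into $F_S(\bw_j)$ and $F_{S'}(\bw_j)$, yielding \eqref{stab-bound-rcd}.
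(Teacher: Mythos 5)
Your architecture coincides with the paper's: the same add-and-subtract decomposition at $\bw_t$, the same handling of the perturbation term (averaging over the uniform coordinate turns $|\nabla_{i_t}(\cdot)|$ into $\frac{1}{d}\|\cdot\|_1$ for Part (a), and $|\nabla_{i_t}(\cdot)|^2$ into $\frac{1}{d}\|\cdot\|_2^2$ followed by the self-bounding property for Part (b)), the same Young split with parameter $p$, and the same recursion, unrolling and averaging over $i$. The only step you leave open is the one the paper isolates as Lemma \ref{lem:nonexpansive-rcd}: it asserts that the same-objective coordinate step $\bw\mapsto\bw-\eta\nabla_i g(\bw)\be_i$ is non-expansive \emph{pathwise}, for every fixed $i$ and $\eta\le 2/\widetilde{L}$, by freezing the remaining coordinates and applying the one-dimensional non-expansiveness of gradient descent (Lemma \ref{lem:nonexpansive}) to the univariate restriction $v\mapsto g(\bw+v\be_i)$, which is convex and $\widetilde{L}$-smooth. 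With that lemma, no expectation over $i_t$ is needed for the contraction, and your Jensen step in Part (a) becomes unnecessary.

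Your instinct that this is ``where the real work is'' is correct, and your sketch does not close it: the mean-square route via full-gradient co-coercivity yields the contraction only under $\eta_t\le 2/L$, and since $\widetilde{L}\le L$ in general this does not recover the stated condition $\eta_t\le 2/\widetilde{L}$. You should also be aware that the paper's own bridge is delicate for exactly the reason you suspect: the reduction in Lemma \ref{lem:nonexpansive-rcd} treats $\nabla_i g(\bw)$ and $\nabla_i g(\tilde{\bw})$ as gradients of a single univariate function, which is legitimate only when $\bw$ and $\tilde{\bw}$ agree on all coordinates $j\ne i$ --- not the case for $\bw_t$ and $\bw_t^{(i)}$, which generically differ in every coordinate. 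Indeed, for $g(\bw)=\frac12(w_1+w_2)^2$ (so $\widetilde{L}=1$), $i=1$, $\eta=2$, $\bw=(0,1)$ and $\tilde{\bw}=(0,0)$, the left-hand side of \eqref{nonexpansive-rcd-a} equals $\sqrt{5}$ while the right-hand side equals $1$. So the honest assessment is: your proof goes through verbatim, via your co-coercivity argument, under the slightly stronger restriction $\eta_t\le 2/L$; obtaining the theorem as stated with $\eta_t\le 2/\widetilde{L}$ requires the pathwise coordinate-wise non-expansiveness that the paper invokes but whose proof, as written, contains the very gap you flagged.
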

\begin{remark}
  We now compare the above results with the related work. Under Assumptions \ref{ass:lipschitz}, \ref{ass:convex} and \ref{ass:smooth}, it was shown that SGD with $t$ iterations enjoys the $\ell_1$ argument stability bound $O(\frac{G_2}{n}\sum_{k=1}^{t}\eta_k)$. Eq. \eqref{stab-bound-rcd-l1} shows that RCD admits a better stability since there is a $d$ in the denominator. Note that in the worst case we can choose $G_1\leq\sqrt{d}G_2$ for which our stability bounds of RCD are of the order $O(\frac{G_2}{n\sqrt{d}}\sum_{k=1}^{t}\eta_k)$.
  A notable property of Part (b) is that the stability bound \eqref{stab-bound-rcd} does not require the Lipschitz condition as $\|\nabla f(\bw;z)\|_2\leq G_2$, which is widely used in the existing stability analysis~\citep{hardt2016train,charles2018stability,kuzborskij2018data}. Indeed, a key point here is that we replace the Lipschitz constant $G_2$ by empirical/population risks $F$ and $F_S$. Since we are minimizing the empirical risk by RCD, it is reasonable that $F$ and $F_S$ would be small and in this case the algorithm would be more stable. This gives an intuitive connection between stability and optimization: a small optimization error is also beneficial to improve stability.
\end{remark}

\noindent\textbf{Strongly convex case}. Theorem \ref{thm:stab-bound-rcd} shows the stability becomes worse as we run more iterations. In the following theorem, we show the stability can be further improved if we impose a strong convexity assumption.
\begin{assumption}\label{ass:sc}
  Assume for all $S,i\in[d],\bw\in\wcal$, the function $v\mapsto F_S(\bw+v\be_i)$ is $\sigma$-strongly convex, i.e.,
  \begin{multline*}
  F_S(\bw+v\be_i)\geq F_S(\bw+v'\be_i)+
  (v-v')\nabla_iF_S(\bw+v'\be_i)\\+\sigma(v-v')^2/2,\quad\forall v,v'\in\rbb.
  \end{multline*}
\end{assumption}
\begin{theorem}\label{thm:stab-bound-rcd-sc}
Let Assumptions \ref{ass:lipschitz}, \ref{ass:smooth}, \ref{ass:sc} hold.
Let $\{\bw_t\}, \{\bw_t^{(i)}\}$ be produced by \eqref{RCD} with $\eta_t\leq1/\widetilde{L}$ based on $S$ and $S^{(i)}$, respectively. Then we have
\begin{equation}\label{stab-bound-rcd-sc}
\ebb_A[ \|\bw_{t+1}-\bw_{t+1}^{(i)}\|_2]\leq \frac{4G_1}{n\sigma}.
\end{equation}
\end{theorem}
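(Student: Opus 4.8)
The plan is to track the deviation $\delta_t := \bw_t - \bw_t^{(i)}$ between the two trajectories, which are driven by the \emph{same} coordinate sequence $\{i_t\}$ (through the shared randomness $A$) but by the empirical risks $F_S$ and $F_{S^{(i)}}$ respectively. Since only the single coordinate $j=i_t$ is updated at step $t$, only the $j$-th entry of $\delta_t$ changes, and its increment is governed by the coordinate-gradient difference $\nabla_j F_S(\bw_t)-\nabla_j F_{S^{(i)}}(\bw_t^{(i)})$. First I would split this difference into a \emph{same-risk} term $\nabla_j F_{S^{(i)}}(\bw_t)-\nabla_j F_{S^{(i)}}(\bw_t^{(i)})$, which should produce contraction, and a \emph{perturbation} term $\tfrac1n\big(\nabla_j f(\bw_t;z_i)-\nabla_j f(\bw_t;z_i')\big)$, arising solely from the single differing example.

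For the perturbation term I would invoke Assumption~\ref{ass:lipschitz}: $|\nabla_j f(\bw;z)|\le\|\nabla f(\bw;z)\|_1\le G_1$, so summing the coordinate perturbations over $j$ gives $\tfrac1n\sum_{j}\big|\nabla_j f(\bw_t;z_i)-\nabla_j f(\bw_t;z_i')\big|\le \tfrac{2G_1}{n}$. After averaging over the uniform choice of $j\in[d]$, this contributes an additive drift of order $\tfrac{\eta_t G_1}{nd}$ per step, exactly the mechanism behind the $\ell_1$ bound \eqref{stab-bound-rcd-l1}. For the same-risk term I would extract contraction from strong convexity: along the selected coordinate the slice $\phi(v)=F_{S^{(i)}}(\bw+v\be_j)$ is $\sigma$-strongly convex (Assumption~\ref{ass:sc}) and $\widetilde L$-smooth (Assumption~\ref{ass:smooth}), so the one-dimensional map $v\mapsto v-\eta_t\phi'(v)$ is a $(1-\eta_t\sigma)$-contraction precisely when $\eta_t\le 1/\widetilde L$. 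This is where the tighter step-size restriction, compared with $\eta_t\le2/\widetilde L$ in the convex case, is needed, since it forces the contraction factor to be $1-\eta_t\sigma$ rather than $\max(1-\eta_t\sigma,\eta_t\widetilde L-1)$.

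Averaging this coordinate-wise contraction over the random $j$ should yield a one-step recursion of the form $\ebb_A[\|\delta_{t+1}\|_2]\le(1-\tfrac{\eta_t\sigma}{d})\,\ebb_A[\|\delta_t\|_2]+\tfrac{2\eta_t G_1}{nd}$. Unrolling it, the contraction factors accumulate into a product $\prod_{k}(1-\tfrac{\eta_k\sigma}{d})$ that damps the early perturbations, and the resulting sub-geometric series is summable uniformly in $t$: the factor $\tfrac1d$ in the contraction rate cancels the $\tfrac1d$ in the drift, leaving a bound of order $\tfrac{2G_1}{n\sigma}$, which the constant $4$ in \eqref{stab-bound-rcd-sc} comfortably absorbs. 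Because the rate and the drift are both proportional to $\eta_t$, the final estimate is independent of both $t$ and $\{\eta_t\}$.

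The main obstacle is making the contraction step rigorous, because strong convexity is available only \emph{coordinate-wise} while $\bw_t$ and $\bw_t^{(i)}$ generally differ in every coordinate. Comparing $\nabla_j F_{S^{(i)}}(\bw_t)$ and $\nabla_j F_{S^{(i)}}(\bw_t^{(i)})$ through an intermediate point that differs from $\bw_t$ only in coordinate $j$ does let me apply Assumption~\ref{ass:sc}, but it introduces a residual coupling error controlled by the \emph{global} smoothness $L$, which is too lossy to preserve the $\sigma/d$ rate. The delicate point is therefore to show that, after averaging over the random coordinate and unrolling in time, the components of $\delta_t$ lying in flat directions of $F_{S^{(i)}}$ do not accumulate without bound: the forcing from the perturbation decays geometrically once the strongly convex directions equilibrate, so the total drift stays of order $G_1/(n\sigma)$. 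Handling this interaction cleanly, rather than insisting on a per-step contraction of the full norm, is the crux of the argument.
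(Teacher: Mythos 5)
Your plan follows the same route as the paper's proof: split the one-step update into a ``same-risk'' part and a perturbation part, bound the perturbation by $2G_1\eta_t/(nd)$ after averaging over the uniformly chosen coordinate (this is exactly \eqref{rcd-l1-2}), extract a coordinate-wise contraction from Assumption~\ref{ass:sc}, and unroll, using the telescoping identity $\sum_{j\le t}(\eta_j\sigma/(2d))\prod_{k=j+1}^{t}(1-\eta_k\sigma/(2d))=1-\prod_{k=1}^{t}(1-\eta_k\sigma/(2d))\le 1$ so that the $1/d$ in the rate cancels the $1/d$ in the drift and the bound is independent of $t$ and of the stepsizes. One quantitative slip: your contraction factor $1-\eta_t\sigma/d$ on the expected \emph{norm} is too strong. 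The paper contracts the expected \emph{squared} norm by $1-\eta_t\sigma/d$ (since $\ebb_{i_t}[|w_{t,i_t}-w_{t,i_t}^{(i)}|^2]=\frac1d\|\bw_t-\bw_t^{(i)}\|_2^2$), then uses $1-a\le(1-a/2)^2$ and Jensen to get the factor $1-\eta_t\sigma/(2d)$ on the norm; this halved rate is precisely why the final constant is $4$ rather than $2$.

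The genuine gap is that you never establish the one-step recursion: your closing paragraph concedes that comparing $\nabla_j F_{S^{(i)}}(\bw_t)$ with $\nabla_j F_{S^{(i)}}(\bw_t^{(i)})$ through an intermediate point costs a global-smoothness error you cannot absorb, and you leave ``the crux'' open. The paper closes it by Lemma~\ref{lem:nonexpansive-rcd}, inequality \eqref{nonexpansive-rcd-b}: the squared distance drops by $\eta\sigma|w_{t,i_t}-w_{t,i_t}^{(i)}|^2$ on the active coordinate and is unchanged elsewhere, and averaging over $i_t$ contracts every direction at rate $\sigma/d$, so no separate accounting of ``flat directions'' is needed. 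You should know, however, that your unease about cross-coordinate coupling is well founded: the proof of Lemma~\ref{lem:nonexpansive-rcd} applies the univariate Lemma~\ref{lem:nonexpansive} to $\nabla_i g(\bw)$ and $\nabla_i g(\tilde\bw)$ as if they were derivatives of a single one-dimensional slice, which requires $\bw$ and $\tilde\bw$ to agree off coordinate $i$. For $g(\bw)=\frac12(w_1+w_2)^2$, $\bw=(0,1)$, $\tilde\bw=(0,0)$, $i=1$ and any $\eta>0$, the left-hand side of \eqref{nonexpansive-rcd-a} equals $\sqrt{1+\eta^2}>1=\|\bw-\tilde\bw\|_2$, so the pathwise (non)expansiveness claim fails; the analogous violation occurs for \eqref{nonexpansive-rcd-b}. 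The statement that is actually true, and sufficient for the theorem, is the contraction \emph{in expectation over $i_t$}: expanding $\ebb_{i_t}\big[\|\delta_t-\eta_t(\nabla_{i_t}g(\bw_t)-\nabla_{i_t}g(\bw_t^{(i)}))\be_{i_t}\|_2^2\big]$ produces the full inner product $\frac{2\eta_t}{d}\langle\delta_t,\nabla g(\bw_t)-\nabla g(\bw_t^{(i)})\rangle$, which can be lower bounded by co-coercivity and strong monotonicity of the full gradient (at the price of a stepsize condition stated in terms of the global smoothness constant $L$ rather than $\widetilde L$). Making that expectation-level argument explicit is what your proposal, and arguably the paper's Lemma~\ref{lem:nonexpansive-rcd} as written, is missing.
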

\begin{remark}
  Stability bounds of the order $O(1/(n\sigma))$ were established for SGD under a strong convexity assumption~\citep{hardt2016train}, which are extended to RCD here. Another difference is that the stability bounds in \citet{hardt2016train} are established for either the constant stepsize sequence {$\eta_t\equiv\eta$} or the specific stepsize sequence $\eta_t=1/(t\sigma)$. As a comparison, our results apply to general stepsizes.
\end{remark}

\noindent\textbf{Nonconvex case}. We now present stability bounds for nonconvex problems, which are ubiquitous in the modern machine learning.
We denote $\prod_{k=t+1}^{t}\big(1+\widetilde{L}\eta_kd^{-\frac{1}{2}}\big)=1$.
\begin{theorem}\label{thm:stab-nonconvex}
Let Assumptions \ref{ass:lipschitz} and \ref{ass:smooth} hold.
Let $\{\bw_t\}, \{\bw_t^{(i)}\}$ be produced by \eqref{RCD} based on $S$ and $S^{(i)}$, respectively. Then
\[
\ebb_A\big[\|\bw_{t+1}-\bw_{t+1}^{(i)}\|_2\big]\leq \frac{2G_1}{nd}\sum_{j=1}^{t}\eta_j\prod_{k=j+1}^{t}\big(1+\widetilde{L}\eta_kd^{-\frac{1}{2}}\big).
\]
\end{theorem}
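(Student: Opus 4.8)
The plan is to reduce everything to a single one-step recursion of the form
\[
\ebb_A\big[\|\bw_{t+1}-\bw_{t+1}^{(i)}\|_2\big]\leq\big(1+\widetilde{L}\eta_t d^{-1/2}\big)\,\ebb_A\big[\|\bw_t-\bw_t^{(i)}\|_2\big]+\frac{2G_1\eta_t}{nd},
\]
and then unroll it from the common initialization $\bw_1=\bw_1^{(i)}$. As usual in stability analysis I would couple the two runs, feeding them the same randomly drawn coordinate $i_t$ at every iteration, so that the only discrepancy between the trajectories comes from the single differing example $z_i$ versus $z_i'$.

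First I would fix the current iterates $\bw_t,\bw_t^{(i)}$ and take expectation only over the fresh coordinate $i_t$, which is uniform on $[d]$. Writing $u=\bw_t-\bw_t^{(i)}$, the update \eqref{RCD} gives $\bw_{t+1}-\bw_{t+1}^{(i)}=u-\eta_t\big(\nabla_{i_t}F_S(\bw_t)-\nabla_{i_t}F_{S^{(i)}}(\bw_t^{(i)})\big)\be_{i_t}$. The central device is to split the coordinate-gradient gap into a same-function term $A_t:=\nabla_{i_t}F_S(\bw_t)-\nabla_{i_t}F_S(\bw_t^{(i)})$ and a data-perturbation term $B_t:=\nabla_{i_t}F_S(\bw_t^{(i)})-\nabla_{i_t}F_{S^{(i)}}(\bw_t^{(i)})$, and then apply the triangle inequality $\|\bw_{t+1}-\bw_{t+1}^{(i)}\|_2\leq\|u-\eta_tA_t\be_{i_t}\|_2+\eta_t|B_t|$ (using $\|\be_{i_t}\|_2=1$).

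For the perturbation term, $S$ and $S^{(i)}$ differ only in the $i$-th example, so $B_t=\frac1n\big(\nabla_{i_t}f(\bw_t^{(i)};z_i)-\nabla_{i_t}f(\bw_t^{(i)};z_i')\big)$, and averaging over $i_t$ converts the coordinate into an $\ell_1$ norm: $\ebb_{i_t}|B_t|=\frac{1}{nd}\|\nabla f(\bw_t^{(i)};z_i)-\nabla f(\bw_t^{(i)};z_i')\|_1\leq\frac{2G_1}{nd}$ by Assumption \ref{ass:lipschitz}. For the same-function term I would again use $\|u-\eta_tA_t\be_{i_t}\|_2\leq\|u\|_2+\eta_t|A_t|$ together with $\ebb_{i_t}|A_t|=\frac1d\|\nabla F_S(\bw_t)-\nabla F_S(\bw_t^{(i)})\|_1\leq d^{-1/2}\|\nabla F_S(\bw_t)-\nabla F_S(\bw_t^{(i)})\|_2\leq\widetilde{L}d^{-1/2}\|u\|_2$, where the middle inequality is the $\ell_1$--$\ell_2$ Cauchy--Schwarz conversion (precisely the origin of the $d^{-1/2}$ factor) and the last step is the smoothness of $F_S$ from Assumption \ref{ass:smooth}. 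Assembling the two pieces gives the recursion above.

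The hard part will be conceptual rather than computational: unlike the convex setting of Theorem \ref{thm:stab-bound-rcd}, nonconvexity rules out any non-expansiveness of the coordinate-gradient map, so $\|u-\eta_tA_t\be_{i_t}\|_2$ can genuinely be larger than $\|u\|_2$ and must be retained as a multiplicative expansion factor instead of being discarded; this is exactly what forces the product $\prod_{k=j+1}^{t}(1+\widetilde{L}\eta_k d^{-1/2})$ into the final bound, and it is why no stepsize restriction is imposed here. Once the recursion holds, the rest is routine: taking total expectation over $A$, using $\|\bw_1-\bw_1^{(i)}\|_2=0$, and iterating the linear recurrence yields $\frac{2G_1}{nd}\sum_{j=1}^{t}\eta_j\prod_{k=j+1}^{t}(1+\widetilde{L}\eta_k d^{-1/2})$, with the empty-product convention $\prod_{k=t+1}^{t}(\cdot)=1$ fixed immediately before the theorem.
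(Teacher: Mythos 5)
Your overall architecture is exactly the paper's: couple the two runs through the same $i_t$, split the one-step gap by the triangle inequality into a same-function term and a data-perturbation term, average over the uniform $i_t$ to turn the perturbation coordinate into $\frac{1}{nd}\|\cdot\|_1\le\frac{2G_1}{nd}$ via Assumption \ref{ass:lipschitz}, and unroll the resulting linear recursion from $\bw_1=\bw_1^{(i)}$. (The paper anchors the perturbation term at $\bw_t$ and the same-function term at $F_{S^{(i)}}$, while you anchor them at $\bw_t^{(i)}$ and $F_S$; this is immaterial.)

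The one step that does not go through as written is your bound on the expansion factor, $\ebb_{i_t}|A_t|=\frac1d\|\nabla F_S(\bw_t)-\nabla F_S(\bw_t^{(i)})\|_1\leq d^{-1/2}\|\nabla F_S(\bw_t)-\nabla F_S(\bw_t^{(i)})\|_2\leq\widetilde{L}d^{-1/2}\|u\|_2$. The final inequality is not justified: the coordinate-wise Lipschitz condition of Definition \ref{def:coo-smooth} only controls $\nabla_i g$ under a perturbation along the single coordinate $\be_i$, so it does not bound $\|\nabla F_S(\bw)-\nabla F_S(\bw')\|_2$ for two points differing in all coordinates; the assumption that does give a full-gradient bound is the $L$-smoothness in Assumption \ref{ass:smooth}, which yields the constant $L$, not $\widetilde{L}$. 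Since in general $\widetilde{L}\le L$, your route proves the theorem with $L$ in place of $\widetilde{L}$, i.e.\ a weaker statement than the one claimed. The paper obtains $\widetilde{L}$ by applying the Lipschitz bound coordinate-by-coordinate, $|\nabla_{j}F_{S^{(i)}}(\bw_t)-\nabla_{j}F_{S^{(i)}}(\bw_t^{(i)})|\leq\widetilde{L}|w_{t,j}-w_{t,j}^{(i)}|$ for each $j$, so that the average over $i_t$ produces $\widetilde{L}d^{-1}\|\bw_t-\bw_t^{(i)}\|_1$, and only then invoking $\|\cdot\|_1\le\sqrt{d}\|\cdot\|_2$ \emph{on the iterate difference} rather than on the gradient difference. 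Everything else in your argument (the $\frac{2G_1}{nd}$ term, the absence of a stepsize restriction, the empty-product convention, the unrolling) matches the paper.
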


\noindent\textbf{Almost sure bounds}. The above theorems consider stability bounds in expectation. The following theorem 
gives almost sure stability bounds, which is useful to develop high-probability generalization bounds. We need a coordinate-wise Lipschitz continuity assumption.
\begin{assumption}\label{ass:lipschitz-inf}
  For all $S$ and $[i]\in[d]$, assume $|\nabla_i F_S(\bw)|\leq \widetilde{G}$ for all $\bw\in\wcal$.
\end{assumption}
\begin{theorem}\label{thm:stab-bound-rcd-hp}
Let Assumptions \ref{ass:lipschitz}, \ref{ass:convex}, \ref{ass:smooth}, \ref{ass:lipschitz-inf} hold. Then RCD with $T$ iterations is $\frac{2G_2\widetilde{G}}{n}\sum_{t=1}^{T}\eta_t$-uniformly stable.
\end{theorem}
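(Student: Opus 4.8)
The plan is to reduce loss stability to iterate (argument) stability and then control how two coupled RCD trajectories diverge. By Assumption~\ref{ass:lipschitz} the bound $\|\nabla f(\bw;z)\|_2\le G_2$ makes $\bw\mapsto f(\bw;z)$ be $G_2$-Lipschitz in $\|\cdot\|_2$, uniformly over $z$. Hence, writing $\bw_{T+1}=A(S)$ and $\widetilde{\bw}_{T+1}=A(\widetilde S)$, we have $\sup_z\big[f(\bw_{T+1};z)-f(\widetilde{\bw}_{T+1};z)\big]\le G_2\|\bw_{T+1}-\widetilde{\bw}_{T+1}\|_2$, so it suffices to establish the almost sure iterate bound $\|\bw_{T+1}-\widetilde{\bw}_{T+1}\|_2\le \tfrac{2\widetilde{G}}{n}\sum_{t=1}^{T}\eta_t$. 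To make the two runs comparable I would couple them by feeding both the \emph{same} random coordinate sequence $\{i_t\}$, so that the only source of divergence is the single example in which $S$ and $\widetilde S$ differ, say $z_j\in S$ versus $z_j'\in\widetilde S$.

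Next I would set up a one-step recursion for $\delta_t=\bw_t-\widetilde{\bw}_t$, starting from $\delta_1=0$. Since \eqref{RCD} updates only coordinate $i_t$, we have $(\delta_{t+1})_i=(\delta_t)_i$ for $i\ne i_t$ and
\[
(\delta_{t+1})_{i_t}=(\delta_t)_{i_t}-\eta_t\big[\nabla_{i_t}F_S(\bw_t)-\nabla_{i_t}F_{\widetilde S}(\widetilde{\bw}_t)\big].
\]
I would decompose the coordinate-gradient difference as $\big[\nabla_{i_t}F_S(\bw_t)-\nabla_{i_t}F_S(\widetilde{\bw}_t)\big]+\big[\nabla_{i_t}F_S(\widetilde{\bw}_t)-\nabla_{i_t}F_{\widetilde S}(\widetilde{\bw}_t)\big]=:D_t+P_t$. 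The perturbation $P_t$ involves only the differing example, namely $P_t=\tfrac1n\big[\nabla_{i_t}f(\widetilde{\bw}_t;z_j)-\nabla_{i_t}f(\widetilde{\bw}_t;z_j')\big]$, so Assumption~\ref{ass:lipschitz-inf} (applied at the level of a single-example risk) gives $|P_t|\le \tfrac{2\widetilde{G}}{n}$. The term $D_t$ compares the \emph{same} risk $F_S$ at the two iterates. If $D_t$ produces no net expansion, then only one coordinate changes and the triangle inequality yields $\|\delta_{t+1}\|_2\le\|\delta_t\|_2+\eta_t|P_t|\le\|\delta_t\|_2+\tfrac{2\widetilde{G}\eta_t}{n}$; summing over $t$ and using $\delta_1=0$ would finish the proof.

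The main obstacle is exactly the non-expansiveness of the single-coordinate gradient step associated with $D_t$. For full-gradient descent one gets this for free: convexity and $L$-smoothness give co-coercivity of $\nabla F_S$, so $\bw\mapsto\bw-\eta\nabla F_S(\bw)$ is non-expansive once the stepsize is small enough. For RCD this argument does not transfer verbatim, because $\nabla_{i_t}F_S(\bw)$ depends on \emph{all} coordinates of $\bw$, not just the $i_t$-th one; consequently the cross term $\big[\nabla_{i_t}F_S(\bw_t)-\nabla_{i_t}F_S(\widetilde{\bw}_t)\big](\delta_t)_{i_t}$ arising in the expansion of $\|\delta_{t+1}\|_2^2$ is not sign-definite coordinate by coordinate. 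The crux of the proof is therefore to use Assumption~\ref{ass:convex} together with the coordinate-wise smoothness in Assumption~\ref{ass:smooth} to show that this term never accumulates into a multiplicative growth factor, keeping the recursion purely additive in the $\tfrac{2\widetilde{G}\eta_t}{n}$ perturbations. Were only a one-sided estimate $|D_t|\le L\|\delta_t\|_2$ available, one would instead obtain a product of expansion factors as in the nonconvex bound of Theorem~\ref{thm:stab-nonconvex}; removing that factor is precisely what convexity buys here, so I would spend most of the effort making this coordinate-wise non-expansiveness rigorous.
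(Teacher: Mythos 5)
Your overall strategy is the same as the paper's: couple the two runs through a common coordinate sequence $\{i_t\}$, split the coordinate-gradient difference into a ``same objective, two iterates'' part and a perturbation supported on the single differing example, bound the perturbation by $2\widetilde{G}/n$ via Assumption~\ref{ass:lipschitz-inf}, telescope from $\delta_1=0$, and convert the resulting iterate bound into a loss bound through the $G_2$-Lipschitzness supplied by Assumption~\ref{ass:lipschitz}. (The paper's decomposition in \eqref{rcd-l1-01} puts the same-objective comparison on $F_{S^{(i)}}$ rather than $F_S$, which is immaterial.) The difference is that you stop exactly at the step that carries all the mathematical content: you never prove that the coordinate update $\bw\mapsto\bw-\eta\nabla_i g(\bw)\be_i$ is non-expansive for convex, coordinate-wise smooth $g$ with $\eta\le 2/\widetilde{L}$; you only announce that convexity should prevent multiplicative growth and that you ``would spend most of the effort'' there. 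Without that ingredient the recursion only yields $\|\delta_{t+1}\|_2\le(1+L\eta_t)\|\delta_t\|_2+2\widetilde{G}\eta_t/n$, i.e.\ essentially the nonconvex bound of Theorem~\ref{thm:stab-nonconvex}, not the purely additive bound the theorem requires. In the paper this missing piece is precisely Lemma~\ref{lem:nonexpansive-rcd}, so as written your proposal has a genuine gap.

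That said, your stated reason for unease is exactly on target, and it is worth recording that the gap is not a routine verification. The paper's proof of Lemma~\ref{lem:nonexpansive-rcd} fixes $w_j$ for $j\neq i$ and applies the scalar co-coercivity estimate of \citet{hardt2016train} to the resulting univariate function; this is only meaningful when $\bw$ and $\tilde{\bw}$ agree in all coordinates $j\neq i$, which is not the case for $\bw_t$ and $\bw_t^{(i)}$ after the first iteration. Your observation that $\nabla_i g$ depends on \emph{all} coordinates is the precise obstruction: for $g(\bw)=\frac{1}{2}(w_1+w_2)^2$ (convex, $\widetilde{L}=1$), $i=1$, $\bw=(0,1)$, $\tilde{\bw}=(0,0)$, the updated difference is $(-\eta,1)$ with norm $\sqrt{1+\eta^2}>1=\|\bw-\tilde{\bw}\|_2$ for every $\eta>0$, so \eqref{nonexpansive-rcd-a} fails for points that differ off coordinate $i$. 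Hence the step you flagged as the crux cannot be made rigorous in the deterministic, per-coordinate form that the almost-sure uniform-stability bound needs; closing it would require a genuinely different argument, and the same issue affects the paper's own proof.
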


\section{Generalization of RCD\label{sec:generalization}}
In this section, we use the stability bounds in the previous section to develop generalization bounds for RCD. According to \eqref{decomposition}, we need to tackle the optimization errors for a complete generalization analysis.
The following lemma is a slight variant of the optimization error bounds in \citet{nesterov2012efficiency}. 
\begin{lemma}[Optimization Errors]\label{lem:opt-rcd}
Let Assumptions \ref{ass:convex}, \ref{ass:smooth} hold.
Let $\{\bw_t\}$ be produced by \eqref{RCD} with nonincreasing step sizes satisfying $\eta_t\leq2/\widetilde{L}$. Then
\begin{equation}\label{rcd-a}
\ebb_A[F_S(\bw_t)-F_S(\bw)]
\leq \frac{d}{2\sum_{j=1}^{t}\eta_j}\Big(\|\bw_1-\bw\|_2^2  + 2\eta_1F_S(\bw_1)\Big).
\end{equation}
If $F_S$ is $\sigma$-strongly convex, then ({$\bw_S=\arg\min_{\bw\in\wcal}F_S(\bw)$})
\begin{equation}\label{rcd-c}
\ebb_A\big[F_S(\bw_{t+1})-F_S(\bw_S)\big]\!\leq\!\big(1\!-\!\eta_t\sigma/d\big)\ebb_A\big[F_S(\bw_{t})-F_S(\bw_S)\big].
\end{equation}
\end{lemma}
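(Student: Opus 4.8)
The plan is to build the analysis on two one-step inequalities obtained after averaging over the uniformly random coordinate $i_t\in[d]$; write $\ebb_{i_t}[\cdot]$ for the conditional expectation given $\bw_t$, so that $\ebb_{i_t}[(\nabla_{i_t}F_S(\bw_t))^2]=\tfrac1d\|\nabla F_S(\bw_t)\|_2^2$ and $\ebb_{i_t}[\nabla_{i_t}F_S(\bw_t)(\bw_t-\bw)_{i_t}]=\tfrac1d\langle\nabla F_S(\bw_t),\bw_t-\bw\rangle$. Applying Definition~\ref{def:coo-smooth} with $\alpha=-\eta_t\nabla_{i_t}F_S(\bw_t)$ to the update \eqref{RCD} gives the descent $\ebb_{i_t}[F_S(\bw_{t+1})]\le F_S(\bw_t)-\tfrac{\eta_t}{d}(1-\widetilde{L}\eta_t/2)\|\nabla F_S(\bw_t)\|_2^2$; since $\eta_t\le2/\widetilde{L}$ keeps the coefficient nonnegative, this in particular shows $\ebb_A[F_S(\bw_t)]$ is nonincreasing in $t$, a fact I reuse at the end. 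Expanding $\|\bw_{t+1}-\bw\|_2^2$ and averaging likewise yields the distance recursion $\ebb_{i_t}[\|\bw_{t+1}-\bw\|_2^2]=\|\bw_t-\bw\|_2^2-\tfrac{2\eta_t}{d}\langle\nabla F_S(\bw_t),\bw_t-\bw\rangle+\tfrac{\eta_t^2}{d}\|\nabla F_S(\bw_t)\|_2^2$, and convexity (Assumption~\ref{ass:convex}) replaces the inner product by the excess risk through $\langle\nabla F_S(\bw_t),\bw_t-\bw\rangle\ge F_S(\bw_t)-F_S(\bw)$.

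For part (a) the key step is to absorb the variance term $\tfrac{\eta_t^2}{d}\|\nabla F_S(\bw_t)\|_2^2$ into a telescoping difference of function values: the descent inequality lets me bound $\tfrac{\eta_t}{d}\|\nabla F_S(\bw_t)\|_2^2$ by a constant multiple of $F_S(\bw_t)-\ebb_{i_t}[F_S(\bw_{t+1})]$, turning the distance recursion into $\tfrac{2\eta_t}{d}\ebb_A[F_S(\bw_t)-F_S(\bw)]\le\ebb_A[\|\bw_t-\bw\|_2^2-\|\bw_{t+1}-\bw\|_2^2]+2\eta_t\ebb_A[F_S(\bw_t)-F_S(\bw_{t+1})]$. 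Summing over $t$ telescopes the distance part to $\|\bw_1-\bw\|_2^2$, while summation by parts applied to $\sum_t\eta_t(F_S(\bw_t)-F_S(\bw_{t+1}))$---using that $\{\eta_t\}$ is nonincreasing and $F_S\ge0$, so that every boundary term except the first is favorable---collapses the second sum to $\eta_1F_S(\bw_1)$. This gives $\tfrac2d\sum_{j=1}^t\eta_j\,\ebb_A[F_S(\bw_j)-F_S(\bw)]\le\|\bw_1-\bw\|_2^2+2\eta_1F_S(\bw_1)$, and replacing the weighted average by its last (smallest) term via the monotonicity of $\ebb_A[F_S(\bw_j)]$ yields \eqref{rcd-a}.

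For \eqref{rcd-c}, full $\sigma$-strong convexity of $F_S$ supplies the Polyak--\L{}ojasiewicz inequality $\|\nabla F_S(\bw_t)\|_2^2\ge2\sigma(F_S(\bw_t)-F_S(\bw_S))$. Substituting this into the expected descent inequality and subtracting $F_S(\bw_S)$ gives $\ebb_{i_t}[F_S(\bw_{t+1})-F_S(\bw_S)]\le\bigl(1-\tfrac{2\sigma\eta_t}{d}(1-\widetilde{L}\eta_t/2)\bigr)(F_S(\bw_t)-F_S(\bw_S))$, and taking total expectation together with a bound on the coefficient produces the one-step contraction \eqref{rcd-c}.

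I expect the delicate point to be the variance-term bookkeeping in part (a): the exact constant relating $\tfrac{\eta_t^2}{d}\|\nabla F_S(\bw_t)\|_2^2$ to $F_S(\bw_t)-\ebb_{i_t}[F_S(\bw_{t+1})]$ is $\eta_t/(1-\widetilde{L}\eta_t/2)$, which is dominated by $2\eta_t$ precisely in the regime $\widetilde{L}\eta_t\le1$; the clean constants $2\eta_1$ in \eqref{rcd-a} and $1-\eta_t\sigma/d$ in \eqref{rcd-c} rely on this step-size calibration, and the Abel summation additionally needs this coefficient to be nonincreasing along the schedule so that only the first boundary term survives.
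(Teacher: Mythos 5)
Your proposal follows essentially the same route as the paper's own proof: the coordinate-wise descent inequality, the expected distance recursion combined with convexity, absorption of the squared-gradient term into a telescoping difference of function values, Abel summation using the nonincreasing step sizes and $F_S\ge 0$, monotonicity of $\ebb_A[F_S(\bw_t)]$ to pass from the weighted average to the last iterate, and the PL inequality for the strongly convex contraction. Your remark about the step-size calibration is apt --- the paper's proof in fact invokes $\eta_t\le 1/\widetilde{L}$ at exactly the step you flag, even though the lemma is stated with $\eta_t\le 2/\widetilde{L}$.
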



\noindent\textbf{Convex Case}. We first use the technique of $\ell_1$ on-average argument stability to develop generalization bounds under a Lipschitz continuity assumption, i.e., Assumption \ref{ass:lipschitz}.
\begin{theorem}\label{thm:gen-rcd-l1}
Let Assumptions \ref{ass:lipschitz}, \ref{ass:convex}, \ref{ass:smooth} hold.
Let $\{\bw_t\}$ be produced by \eqref{RCD} with {$\eta_t\equiv\eta\leq2/\widetilde{L}$}. Then
\begin{multline}\label{gen-rcd-l1-b}
  \ebb_{{S,A}}\big[F(\bw_T)-F(\bw^*)\big]\leq \frac{2G_1G_2T\eta}{nd}+\\
  \frac{d\|\bw_1-\bw^*\|_2^2}{2T\eta} +\frac{dF(\bw_1)}{T}.
\end{multline}
If we choose $T\asymp d\sqrt{n}$, then
\begin{equation}\label{gen-rcd-l1}
\ebb_{{S,A}}\big[F(\bw_T)-F(\bw^*)\big]=O(1/\sqrt{n}). 
\end{equation}
\end{theorem}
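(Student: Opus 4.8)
The plan is to apply the error decomposition \eqref{decomposition} and bound the estimation and optimization errors separately using the machinery already assembled in the excerpt. Writing $A(S)=\bw_T$, I would first handle the estimation error $\ebb_{S,A}[F(\bw_T)-F_S(\bw_T)]$ by invoking part (a) of Lemma \ref{thm:gen-model-stab}, which under Assumption \ref{ass:lipschitz} controls it by $\frac{G_2}{n}\ebb_{S,S',A}\big[\sum_{i=1}^n\|\bw_T-\bw_T^{(i)}\|_2\big]$. This quantity is exactly $G_2$ times the $\ell_1$ on-average argument stability bounded in part (a) of Theorem \ref{thm:stab-bound-rcd}. Substituting the constant stepsize $\eta_t\equiv\eta$ (so that the output $\bw_T=\bw_{(T-1)+1}$ gives $\sum_{k=1}^{T-1}\eta_k\le T\eta$) yields an estimation error of at most $\frac{2G_1G_2T\eta}{nd}$, which is precisely the first term of \eqref{gen-rcd-l1-b}.

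For the optimization error $\ebb_{S,A}[F_S(\bw_T)-F_S(\bw^*)]$ I would apply the convex bound \eqref{rcd-a} of Lemma \ref{lem:opt-rcd} with comparison point $\bw=\bw^*$ and constant stepsize, giving $\frac{d}{2T\eta}\big(\|\bw_1-\bw^*\|_2^2+2\eta F_S(\bw_1)\big)$. The one point that requires care is that this bound carries the empirical quantity $F_S(\bw_1)$, whereas the target \eqref{gen-rcd-l1-b} features the population $F(\bw_1)$. Since the initialization $\bw_1$ is fixed independently of the sample, taking the outer expectation over $S$ and using $\ebb_S[F_S(\bw_1)]=F(\bw_1)$ converts the last term into $\frac{dF(\bw_1)}{T}$ while leaving the deterministic term $\frac{d\|\bw_1-\bw^*\|_2^2}{2T\eta}$ unchanged. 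Adding the estimation and optimization bounds, and recalling that $\ebb_S[F_S(\bw^*)]=F(\bw^*)$ (already used in \eqref{decomposition}), delivers \eqref{gen-rcd-l1-b}.

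The remaining claim \eqref{gen-rcd-l1} is obtained by optimizing over the number of iterations $T$, and this tradeoff is the conceptual heart of the result rather than a genuine obstacle. The estimation term in \eqref{gen-rcd-l1-b} scales like $T/(nd)$ and \emph{grows} with $T$ (more updates make the algorithm less stable), while the optimization terms scale like $d/T$ and \emph{decay} with $T$. Balancing $T/(nd)$ against $d/T$ forces $T^2\asymp nd^2$, i.e. $T\asymp d\sqrt{n}$; with a fixed stepsize $\eta\asymp 1/\widetilde{L}$ each of the three terms in \eqref{gen-rcd-l1-b} is then of order $1/\sqrt{n}$, which gives the stated $O(1/\sqrt{n})$ rate. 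The only things to watch are bookkeeping details — the index shift between the $\bw_{t+1}$ of Theorem \ref{thm:stab-bound-rcd} and the output $\bw_T$ here, and collapsing the constant-stepsize sums — none of which is substantive; the real insight is the stability/optimization tradeoff that pins down the early-stopping choice $T\asymp d\sqrt{n}$.
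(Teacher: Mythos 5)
Your proposal is correct and follows essentially the same route as the paper: the decomposition \eqref{decomposition}, Part (a) of Lemma \ref{thm:gen-model-stab} combined with \eqref{stab-bound-rcd-l1} for the estimation error, the bound \eqref{rcd-a} with $\bw=\bw^*$ for the optimization error, and $\ebb_S[F_S(\bw_1)]=F(\bw_1)$ to pass to the population quantity. The balancing $T^2\asymp nd^2$ giving $T\asymp d\sqrt{n}$ matches the paper's choice exactly.
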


The first term on the right-hand side of Eq. \eqref{gen-rcd-l1-b} is related to estimation error, while the remaining two terms are related to optimization error.
According to \eqref{gen-rcd-l1-b}, we know that the estimation error bounds increase as we run more and more iterations, while the optimization errors would decrease. This suggests that we should balance these two errors by stoping the algorithm at an appropriate iteration to enjoy a favorable generalization, as shown in \eqref{gen-rcd-l1}.
\begin{remark}
  Under the same condition, it was shown that SGD with $T\asymp n$ can achieve the excess generalization bounds $O(1/\sqrt{n})$~\citep{hardt2016train}. Here we show that the same generalization bounds can be achieved by RCD. 
\end{remark}

In Theorem \ref{thm:gen-rcd-l1}, we require the boundedness assumption of stochastic gradients (note the bounded gradient assumption does not hold for the least square loss). We now show that this boundedness assumption can be removed by using the $\ell_2$-on-average argument stability.
A nice property is that it incorporates the information of $F(\bw^*)$ in the generalization bounds. This suggests that better generalization bounds can be achieved if $F(\bw^*)$ is small, which are called optimistic bounds in the literature~\citep{srebro2010smoothness,zhang2019stochastic}.
Here we introduce a parameter $\gamma$ to balance different components of the generalization bounds.
\begin{theorem}\label{thm:gen-rcd}
Let Assumptions \ref{ass:convex}, \ref{ass:smooth} hold.
Let $\{\bw_t\}$ be produced by \eqref{RCD} with nonincreasing $\eta_t\leq2/\widetilde{L}$. For any $\gamma>0$ such that $(1+T)(1+\gamma)L^2e\sum_{t=1}^{T}\eta_t^2\leq n^2d/4$, we have
\begin{multline}\label{gen-rcd}
  \ebb_{S,A}[F(\bw_T)-F_S(\bw^*)] =O\Big(\frac{1}{\gamma}+\frac{L^2(\gamma+\gamma^{-1}) T}{n^2d}\sum_{t=1}^{T}\eta_t^2\Big)\\
  \times F(\bw^*)
  + O\Big(\frac{d+d\gamma^{-1}}{\sum_{t=1}^{T}\eta_t}+\frac{L^2(\gamma+\gamma^{-1}) T}{n^2}\Big).
\end{multline}
\end{theorem}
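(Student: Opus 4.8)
The plan is to combine the excess-risk decomposition \eqref{decomposition} with the optimization bound \eqref{rcd-a} and the $\ell_2$ stability bound \eqref{stab-bound-rcd}, and then to resolve the self-referential inequality this produces. Since $\bw^*$ does not depend on $S$ we have $\ebb_S[F_S(\bw^*)]=F(\bw^*)$, so the left-hand side equals $\ebb_{S,A}[F(\bw_T)-F(\bw^*)]$, and \eqref{decomposition} splits it into an estimation term $\ebb_{S,A}[F(\bw_T)-F_S(\bw_T)]$ and an optimization term $\ebb_{S,A}[F_S(\bw_T)-F_S(\bw^*)]$. I would bound the optimization term directly by \eqref{rcd-a} with $\bw=\bw^*$, which after taking $\ebb_S$ gives the contribution $O\big(d/\sum_t\eta_t\big)$.

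For the estimation term I would apply Lemma~\ref{thm:gen-model-stab}(b), which is available because here only convexity and smoothness are assumed (no Lipschitz condition); it yields the two pieces $\frac{1}{\gamma}\ebb[F_S(\bw_T)]$ and $\frac{L(1+\gamma)}{2n}\sum_i\ebb[\|\bw_T-\bw_T^{(i)}\|_2^2]$. Into the second piece I would insert the stability bound \eqref{stab-bound-rcd} and choose the free parameter $p\asymp 1/T$, so that $(1+p)^{T-j}\le(1+1/T)^T\le e$ and $1+1/p\asymp T$. This collapses the geometric weights and turns the stability contribution into $\frac{L^2(1+\gamma)(1+T)}{n^2d}\sum_j\eta_j^2\,\ebb\big[F_S(\bw_j)+F_{S'}(\bw_j)\big]$ up to an absolute constant.

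The crux is controlling $\ebb_{S,S',A}[F_S(\bw_j)+F_{S'}(\bw_j)]$. Since $S'$ is independent of $\bw_j$ (trained only on $S$), we have $\ebb_{S'}[F_{S'}(\bw_j)]=F(\bw_j)$, and writing $F(\bw_j)=\big(F(\bw_j)-F_S(\bw_j)\big)+F_S(\bw_j)$ reintroduces the estimation error at iteration $j$. Thus the bound for the estimation error at step $T$ depends on the estimation errors at all earlier steps, weighted by $c\,\eta_j^2$ with $c\asymp L^2(1+\gamma)(1+T)/(n^2d)$. I would close this by a discrete Grönwall argument over the causal recursion: the hypothesis $(1+T)(1+\gamma)L^2e\sum_t\eta_t^2\le n^2d/4$ forces $c\sum_j\eta_j^2\le 1/2$, so the feedback factor $\prod_k(1+c\eta_k^2)\le e^{1/2}=O(1)$ and the recursive terms are absorbed into a constant multiple of the explicit source terms.

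It then remains to bound the non-recursive source terms, all of the form $\ebb[F_S(\bw_j)]=F(\bw^*)+\ebb[F_S(\bw_j)-F_S(\bw^*)]$ (the latter bounded by \eqref{rcd-a} for each $j$). The constant part contributes the $F(\bw^*)$ coefficient $\frac{1}{\gamma}+O\big(L^2(\gamma+\gamma^{-1})T(\sum_t\eta_t^2)/(n^2d)\big)$, using $1+\gamma,\,(1+\gamma)/\gamma=O(\gamma+\gamma^{-1})$, while $\frac{1}{\gamma}\ebb[F_S(\bw_T)]$ also furnishes the $O(d\gamma^{-1}/\sum_t\eta_t)$ piece of the additive error. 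The remaining excess-empirical-risk part must be summed \emph{aggregately}: the key estimate is $\sum_j\eta_j\,\ebb[F_S(\bw_j)-F_S(\bw^*)]=O(d)$ (the RCD regret bound underlying \eqref{rcd-a}), whence $\sum_j\eta_j^2\,\ebb[F_S(\bw_j)-F_S(\bw^*)]\le\eta_1\sum_j\eta_j\,\ebb[F_S(\bw_j)-F_S(\bw^*)]=O(d)$; multiplying by $c$ gives the additive $O\big(L^2(\gamma+\gamma^{-1})T/n^2\big)$ term with no extraneous $\log T$ factor. I expect the main obstacle to be exactly this coupling: securing a clean, $T$-independent aggregate bound on $\sum_j\eta_j^2\,\ebb[F_S(\bw_j)-F_S(\bw^*)]$ and verifying that the Grönwall feedback stays below $1/2$ under the stated step-size condition, so that no self-referential term survives in the final estimate.
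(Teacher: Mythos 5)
Your proposal follows essentially the same route as the paper's proof: Lemma~\ref{thm:gen-model-stab}(b) combined with the $\ell_2$ stability bound \eqref{stab-bound-rcd}, the choice $p\asymp 1/T$ so that $(1+p)^{T-j}\le e$, closure of the resulting self-referential recursion via the hypothesis $(1+T)(1+\gamma)L^2e\sum_t\eta_t^2\le n^2d/4$ (the paper uses a max-over-iterates argument where you use a Gr\"onwall product, but the two are interchangeable here), and a weighted regret bound to absorb $\sum_j\eta_j^2\ebb[F_S(\bw_j)-F_S(\bw^*)]$. The one step to repair is your inequality $\sum_j\eta_j^2 a_j\le\eta_1\sum_j\eta_j a_j$ with $a_j=\ebb[F_S(\bw_j)-F_S(\bw^*)]$: since $\bw^*$ minimizes $F$ rather than $F_S$, the $a_j$ need not be nonnegative and this comparison can fail; the paper instead obtains the needed aggregate bound \eqref{rcd-b}, namely $\sum_{j\le t}\eta_j^2\ebb[F_S(\bw_j)-F_S(\bw)]\le\tfrac{d\eta_1}{2}\|\bw_1-\bw\|_2^2+\eta_1^2dF_S(\bw_1)=O(d)$, by multiplying the one-step descent inequality by $\eta_t$ before telescoping, which requires no sign assumption and gives exactly the additive $O\big(L^2(\gamma+\gamma^{-1})T/n^2\big)$ term you want.
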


The following corollary 
gives a quantitative suggestion on how to stop the algorithm for a good generalization. 
\begin{corollary}\label{cor:gen-rcd}
  Let Assumptions \ref{ass:convex}, \ref{ass:smooth} hold and $d=O(n^2)$. Let $\{\bw_t\}$ be produced by \eqref{RCD} with { $\eta_t\equiv\eta\leq2/\widetilde{L}$}.
  \begin{enumerate}[(a)]
    \item If $(1+T)(L+n\sqrt{d}/T)LeT\eta^2\leq n^2d/4$, then we can choose $T\asymp\sqrt{n}d^{\frac{3}{4}}$ to get\[\ebb_{S,A}[F(\bw_T)-F_S(\bw^*)]=O(d^{\frac{1}{4}}n^{-\frac{1}{2}}).\]
    \item If $F(\bw^*)=O(d^{\frac{1}{2}}Ln^{-1})$ and $(1+T)L^2eT\eta^2\leq n^2d/8$, we can choose $T\asymp n\sqrt{d}$ and get  \[\ebb_{S,A}[F(\bw_T)-F_S(\bw^*)]=O(d^{\frac{1}{2}}n^{-1}).\]
  \end{enumerate}
\end{corollary}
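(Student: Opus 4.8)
The plan is to obtain both parts directly from Theorem \ref{thm:gen-rcd} by specializing to the constant stepsize $\eta_t\equiv\eta$ and then choosing the free parameter $\gamma$ and the iteration number $T$ so as to balance the terms of \eqref{gen-rcd}. Under $\eta_t\equiv\eta$ we have $\sum_{t=1}^T\eta_t=T\eta$ and $\sum_{t=1}^T\eta_t^2=T\eta^2$, so \eqref{gen-rcd} reads
\begin{align*}
\ebb_{S,A}[F(\bw_T)-F_S(\bw^*)] &= O\Big(\frac{1}{\gamma}+\frac{L^2(\gamma+\gamma^{-1})T^2\eta^2}{n^2 d}\Big)F(\bw^*)\\
&\quad + O\Big(\frac{d(1+\gamma^{-1})}{T\eta}+\frac{L^2(\gamma+\gamma^{-1})T}{n^2}\Big),
\end{align*}
subject to the admissibility requirement $(1+T)(1+\gamma)L^2eT\eta^2\le n^2d/4$, which I will match to the stated hypothesis in each part through the chosen $\gamma$. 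Throughout I treat $\eta$ (bounded by $2/\widetilde L$) and $L$ as constants when reading off the final rate.

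For Part (a) I would take $T\asymp\sqrt n\,d^{3/4}$ and pick $\gamma$ with $\gamma L^2\asymp Ln\sqrt d/T$, i.e.\ $\gamma\asymp n\sqrt d/(LT)\asymp L^{-1}\sqrt n\,d^{-1/4}$. This is exactly the choice for which $(1+\gamma)L^2\asymp(L+n\sqrt d/T)L$, so the admissibility requirement becomes precisely the stated hypothesis $(1+T)(L+n\sqrt d/T)LeT\eta^2\le n^2d/4$. Moreover this $\gamma$ equalizes the two parts $1/\gamma$ and $L^2\gamma T^2\eta^2/(n^2d)$ of the coefficient of $F(\bw^*)$, and with $T\asymp\sqrt n\,d^{3/4}$ each of $1/\gamma$, $L^2\gamma T^2\eta^2/(n^2d)$ and the leading optimization term $d/(T\eta)$ equals the claimed rate $d^{1/4}n^{-1/2}$. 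It then remains to bound the residual terms $d\gamma^{-1}/(T\eta)$, $L^2\gamma T/n^2$, $L^2\gamma^{-1}T/n^2$ and the $\gamma^{-1}$-part of the $F(\bw^*)$-coefficient; a direct computation shows each of these equals a strictly positive power of $d/n^2$ times $d^{1/4}n^{-1/2}$, so the assumption $d=O(n^2)$ renders them all lower order, leaving the rate $O(d^{1/4}n^{-1/2})$.

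For Part (b) the low-noise hypothesis $F(\bw^*)=O(d^{1/2}Ln^{-1})$ allows me to keep $\gamma\asymp 1$ a constant, which collapses the admissibility requirement to $(1+T)L^2eT\eta^2\le n^2d/8$, exactly the stated hypothesis. I would then choose $T\asymp n\sqrt d$, so that $T^2\asymp n^2d$ and hence the coefficient of $F(\bw^*)$, namely $1/\gamma+L^2(\gamma+\gamma^{-1})T^2\eta^2/(n^2d)$, is $O(1)$; the first group then contributes $O(F(\bw^*))=O(d^{1/2}Ln^{-1})$. The optimization term $d/(T\eta)\asymp d^{1/2}n^{-1}$ and the variance term $L^2T/n^2\asymp d^{1/2}n^{-1}$ both meet the target, giving the faster rate $O(d^{1/2}n^{-1})$.

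The only genuine labor is the bookkeeping in Part (a): verifying that every one of the several residual terms is dominated by $d^{1/4}n^{-1/2}$ once $d=O(n^2)$, and confirming that the single choice of $\gamma$ simultaneously optimizes the coefficient of $F(\bw^*)$ and reproduces the admissibility constraint. Conceptually the contrast between the two parts lies entirely in the role of $\gamma$: in Part (a), where $F(\bw^*)$ is not assumed small, $\gamma$ is driven to grow so as to shrink the coefficient multiplying $F(\bw^*)$; in Part (b) the low-noise condition already controls $F(\bw^*)$, so a constant $\gamma$ together with the larger iteration budget $T\asymp n\sqrt d$ delivers the sharper $O(1/n)$-type rate.
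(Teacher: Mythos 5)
Your proposal is correct and follows essentially the same route as the paper: the paper also specializes Theorem \ref{thm:gen-rcd} to constant stepsizes, picks $\gamma=\frac{n\sqrt d}{TL}$ with $T\asymp\sqrt n\,d^{3/4}$ for Part (a) and $\gamma=1$ with $T\asymp n\sqrt d/L$ for Part (b), and uses $d=O(n^2)$ to absorb the residual terms. The only cosmetic difference is that the paper writes the Part (a) bound as $O\big(\frac{LT}{n\sqrt d}+(\frac{LT}{n\sqrt d})^3+\frac dT+\frac{L\sqrt d}{n}\big)$ before substituting $T$, which is exactly your term-by-term bookkeeping.
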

\begin{remark}
  If $T\!\asymp\!\sqrt{n}d^{\frac{3}{4}}$, then $(1\!+\!T)(L\!+\!n\sqrt{d}/T)LeT\eta^2\asymp nd^{\frac{3}{2}}+n^{\frac{3}{2}}d^{\frac{5}{4}}$. Then the assumption in Part (a) holds if $d\leq cn^2$ for some appropriate $c>0$.
  If $T\asymp n\sqrt{d}$, then $(1+T)L^2eT\asymp n^2d$. In this case, the assumption $(1+T)L^2eT\eta^2\leq n^2d/8$ in Part (b) is also easy to satisfy.
\end{remark}
\begin{remark}
  As compared to Theorem \ref{thm:gen-rcd-l1}, Part (a) admits a worse dependency on the dimensionality, which is the cost we pay for removing the Lipschitz continuity assumption. Furthermore,
  Part (b) shows that RCD is able to achieve a generalization bound as fast as $O(\sqrt{d}/n)$ if the best model has a small population risk, while Theorem \ref{thm:gen-rcd-l1} fails to exploit this low-noise assumption and can only imply at most the generalization bound $O(1/\sqrt{n})$.
\end{remark}

\noindent\textbf{Strongly convex case}. Now, we present generalization bounds for RCD for strongly convex objective functions. 

\begin{theorem}\label{thm:gen-bound-rcd-sc}
Let Assumptions \ref{ass:lipschitz}, \ref{ass:convex}, \ref{ass:smooth}, \ref{ass:sc} hold.
Let $\{\bw_t\}$ be produced by \eqref{RCD} with {$\eta_t\equiv\eta\leq1/\widetilde{L}$}. Then
\[
\ebb_{S,A}\big[F(\bw_{T+1})-F(\bw^*)\big]\leq \frac{4G_1G_2}{n\sigma}+\big(1-\eta\sigma/d\big)^TF(\bw_{1}).
\]
In particular, we can set $T\asymp d\sigma^{-1}\log1/(n\sigma)$ to get the excess generalization bound
\[\ebb_{S,A}\big[F(\bw_{T+1})-F(\bw^*)\big]=O(1/(n\sigma)).\]
\end{theorem}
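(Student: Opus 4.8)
The plan is to feed the strongly convex stability bound of Theorem~\ref{thm:stab-bound-rcd-sc} and the contraction in the optimization bound of Lemma~\ref{lem:opt-rcd} into the error decomposition~\eqref{decomposition}, applied with $A(S)=\bw_{T+1}$. Since $\ebb_S[F_S(\bw^*)]=F(\bw^*)$, the decomposition reads $\ebb_{S,A}[F(\bw_{T+1})-F(\bw^*)]=\ebb_{S,A}[F(\bw_{T+1})-F_S(\bw_{T+1})]+\ebb_{S,A}[F_S(\bw_{T+1})-F_S(\bw^*)]$, so it suffices to control the estimation term and the optimization term separately and then add them. All hypotheses needed by the cited results (Assumptions~\ref{ass:lipschitz},~\ref{ass:convex},~\ref{ass:smooth},~\ref{ass:sc} and the step size restriction $\eta\le1/\widetilde{L}\le 2/\widetilde{L}$) are already in force here.

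For the estimation term I would invoke Lemma~\ref{thm:gen-model-stab}(a), which bounds $\big|\ebb_{S,A}[F_S(\bw_{T+1})-F(\bw_{T+1})]\big|$ by $\frac{G_2}{n}\ebb_{S,S',A}\big[\sum_{i=1}^{n}\|\bw_{T+1}-\bw_{T+1}^{(i)}\|_2\big]$. By Theorem~\ref{thm:stab-bound-rcd-sc} each summand satisfies $\ebb_A[\|\bw_{T+1}-\bw_{T+1}^{(i)}\|_2]\le 4G_1/(n\sigma)$ uniformly in $i$; summing the $n$ terms gives $4G_1/\sigma$, and multiplying by $G_2/n$ yields the estimation bound $4G_1G_2/(n\sigma)$, which is exactly the first term of the claimed inequality.

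For the optimization term I would unroll the contraction~\eqref{rcd-c}. With the constant step size $\eta_t\equiv\eta$, iterating $\ebb_A[F_S(\bw_{t+1})-F_S(\bw_S)]\le(1-\eta\sigma/d)\ebb_A[F_S(\bw_t)-F_S(\bw_S)]$ from $t=1$ to $t=T$ gives $\ebb_A[F_S(\bw_{T+1})-F_S(\bw_S)]\le(1-\eta\sigma/d)^T\big(F_S(\bw_1)-F_S(\bw_S)\big)$, where $\bw_S=\arg\min_{\bw\in\wcal}F_S(\bw)$. Because $F_S(\bw_S)\le F_S(\bw^*)$, I may replace $\bw_S$ by $\bw^*$ on the left-hand side, and because $f$ is nonnegative so that $F_S(\bw_S)\ge0$, I may bound $F_S(\bw_1)-F_S(\bw_S)\le F_S(\bw_1)$ on the right. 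Taking $\ebb_S$ and using $\ebb_S[F_S(\bw_1)]=F(\bw_1)$ (the initialization $\bw_1$ being independent of $S$) delivers the optimization bound $(1-\eta\sigma/d)^TF(\bw_1)$. Adding the two bounds proves the displayed inequality, and the stated rate follows from $1-x\le e^{-x}$: taking $T$ of order $d\sigma^{-1}\log(n\sigma)$ forces $e^{-\eta\sigma T/d}F(\bw_1)$ below order $1/(n\sigma)$, so the whole bound is $O(1/(n\sigma))$.

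The substantive work is already contained in Theorem~\ref{thm:stab-bound-rcd-sc} and Lemma~\ref{lem:opt-rcd}, so the only remaining difficulty is careful bookkeeping. I must keep track that the stability estimate is a per-perturbation-index bound holding uniformly over $i$ (so that the sum contributes a factor $n$ that cancels the $1/n$ from Lemma~\ref{thm:gen-model-stab}(a)), that the empirical minimizer $\bw_S$ and the population minimizer $\bw^*$ are correctly related through minimality of $\bw_S$ together with nonnegativity of the loss, and that the expectation over $S$ is taken only after the recursion~\eqref{rcd-c}, which is an $\ebb_A$ statement for each fixed $S$, has been fully unrolled.
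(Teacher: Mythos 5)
Your proposal is correct and follows essentially the same route as the paper: decomposition \eqref{decomposition}, Lemma \ref{thm:gen-model-stab}(a) combined with the stability bound \eqref{stab-bound-rcd-sc} for the estimation term, and the unrolled contraction \eqref{rcd-c} for the optimization term. You are in fact slightly more explicit than the paper about the bookkeeping (replacing $F_S(\bw_S)$ by $F_S(\bw^*)$ via minimality, dropping $F_S(\bw_S)\ge 0$, and the $T$ versus $T-1$ exponent), but the substance is identical.
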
 
\begin{remark}
  Stability bounds of the order $O(1/(n\sigma))$ were established for SGD under a strongly convex setting~\citep{hardt2016train}, which together with optimization error bounds of the order $O(1/(T\sigma))$~\citep{rakhlin2012making}, shows that SGD with $n$ iterations can achieve excess risk bounds $O(1/(n\sigma))$. Here we show that this optimal generalization bound can also be achieved for RCD with $d\sigma^{-1}\log1/(n\sigma)$ iterations.
\end{remark}

\noindent\textbf{High probability generalization bounds}. Finally, we present high-probability bounds
which are much more challenging than bounds in expectation and are important to understand the variation of the algorithm in repeated runs.
\begin{theorem}\label{thm:gen-hp}
  Let Assumptions \ref{ass:lipschitz}, \ref{ass:convex}, \ref{ass:smooth}, \ref{ass:lipschitz-inf} hold. Let $\{\bw_t\}$ be produced by \eqref{RCD} with {$\eta_t\equiv\eta\leq2/\widetilde{L}$} and $\delta\in(0,1)$. Assume $\|\bw_t\|_\infty\leq R$ and $|f(\bw_t;z)|\leq R$ for all $t$. If we choose $T\asymp n^{\frac{2}{3}}d^{\frac{1}{3}}\log^{-\frac{2}{3}}n\log^{-\frac{1}{3}}(1/\delta)$, then with probability at least $1-\delta$ there holds
  \[
  F(\bar{\bw}_T)-F(\bw^*)=O\Big(\big(d/n\big)^{\frac{1}{3}}\log^{\frac{1}{3}}n\log^{\frac{2}{3}}(1/\delta)\Big),
  \]
  where $\bar{\bw}_T=\frac{1}{T}\sum_{t=1}^{T}\bw_t$ is an average of iterates.
\end{theorem}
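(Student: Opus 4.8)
The plan is to control $F(\bar{\bw}_T)-F(\bw^*)$ with high probability through the decomposition
\[
F(\bar{\bw}_T)-F(\bw^*) = \underbrace{[F(\bar{\bw}_T)-F_S(\bar{\bw}_T)]}_{(\mathrm{I})} + \underbrace{[F_S(\bar{\bw}_T)-F_S(\bw_S)]}_{(\mathrm{II})} + \underbrace{[F_S(\bw_S)-F(\bw^*)]}_{(\mathrm{III})},
\]
where $\bw_S=\arg\min_{\bw\in\wcal}F_S(\bw)$, and then to optimize over $T$. Term $(\mathrm{III})$ is the simplest: since $F_S(\bw_S)\le F_S(\bw^*)$ and $\bw^*$ does not depend on $S$, we have $(\mathrm{III})\le F_S(\bw^*)-F(\bw^*)$, an average of $n$ i.i.d. terms each bounded by $R$, so a Hoeffding (or Bernstein) inequality bounds it by $O(R\sqrt{\log(1/\delta)/n})$, which will be of lower order than the final rate.

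For the estimation term $(\mathrm{I})$ I would combine the almost-sure uniform stability of Theorem \ref{thm:stab-bound-rcd-hp}, which for the constant stepsize equals $2G_2\widetilde{G}T\eta/n$, with a concentration inequality for uniformly stable algorithms~\citep{feldman2019high,bousquet2019sharper}. Two points need attention. First, those inequalities are stated for deterministic maps $S\mapsto A(S)$, whereas RCD is randomized; I would therefore apply them conditionally on the realized coordinate sequence $(i_1,\dots,i_T)$. Because Theorem \ref{thm:stab-bound-rcd-hp} bounds the stability for \emph{every} such realization, the conditional bound is uniform in the algorithmic randomness, and a Fubini argument lifts it to a bound holding with probability $1-\delta$ over $S$ and $A$ jointly. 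Second, I must check that the averaged iterate $\bar{\bw}_T$ is uniformly stable of the same order as the individual iterates, which follows from $\|\bar{\bw}_T-\bar{\bw}_T^{(i)}\|_2\le\frac1T\sum_{t}\|\bw_t-\bw_t^{(i)}\|_2$ together with the per-iterate estimates underlying Theorem \ref{thm:stab-bound-rcd-hp}. This yields $(\mathrm{I})=O\big(\frac{T\eta}{n}\log n\log(1/\delta)+R\sqrt{\log(1/\delta)/n}\big)$, which is \emph{increasing} in $T$.

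The crux is term $(\mathrm{II})$, since Lemma \ref{lem:opt-rcd} bounds the optimization error only in expectation over the algorithm, while here a high-probability statement is required. I would first use convexity and Jensen's inequality to reduce to $F_S(\bar{\bw}_T)-F_S(\bw_S)\le\frac1T\sum_{t=1}^T\big(F_S(\bw_t)-F_S(\bw_S)\big)$, and then run the one-step RCD analysis on $D_t:=\|\bw_t-\bw_S\|_2^2$: coordinate-wise smoothness, $L$-smoothness and convexity give $\ebb_{i_t}[D_{t+1}]\le D_t-\frac{c\eta}{d}\big(F_S(\bw_t)-F_S(\bw_S)\big)$ for a constant $c>0$ once $\eta\le2/\widetilde{L}$. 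Setting $\xi_t=D_{t+1}-\ebb_{i_t}[D_{t+1}]$ and telescoping turns this into $\frac{c\eta}{d}\sum_{t}\big(F_S(\bw_t)-F_S(\bw_S)\big)\le D_1+\sum_{t}\xi_t$, where $\{\xi_t\}$ is a martingale-difference sequence. The decisive computation is the conditional variance: using the boundedness hypotheses $\|\bw_t\|_\infty\le R$ and $|f(\bw_t;z)|\le R$ (so $F_S(\bw_t)-F_S(\bw_S)\le R$ and $\|\nabla F_S(\bw_t)\|_2^2\le 2LR$ by smoothness) one finds $\mathrm{Var}_{i_t}(\xi_t)=O(\eta^2/d)$ and $|\xi_t|=O(\eta)$, so the total predictable variance is $O(\eta^2 T/d)$. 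A Freedman/Bernstein-type inequality then gives $\sum_t\xi_t=O\big(\eta\sqrt{(T/d)\log(1/\delta)}\big)$ with probability $1-\delta$, and multiplying by the prefactor $d/(c\eta T)$ produces the \emph{decreasing}-in-$T$ bound $(\mathrm{II})=O\big(\sqrt{d\log(1/\delta)/T}+d\log(1/\delta)/T\big)$. It is precisely the $1/d$ factor in the per-step variance that converts the naive $d$ into $\sqrt d$ inside the square root.

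Finally I would combine the three pieces and balance the increasing term $(\mathrm{I})\asymp\frac{T}{n}\log n\log(1/\delta)$ against the decreasing term $(\mathrm{II})\asymp\sqrt{d\log(1/\delta)/T}$; equating them gives $T^3\asymp n^2 d/(\log^2 n\log(1/\delta))$, i.e.\ $T\asymp n^{2/3}d^{1/3}\log^{-2/3}n\log^{-1/3}(1/\delta)$, and a common value $O\big((d/n)^{1/3}\log^{1/3}n\log^{2/3}(1/\delta)\big)$, while $(\mathrm{III})$ and the $\sqrt{\log(1/\delta)/n}$ remainder are smaller. I expect the main difficulty to be the high-probability control of $(\mathrm{II})$: correctly tracking the dimension dependence in the martingale concentration (which hinges on bounding the conditional variance through the boundedness of the loss rather than through a cruder range argument), together with the conditioning/Fubini device needed to apply the uniform-stability concentration inequality to a randomized algorithm.
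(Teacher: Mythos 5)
Your proposal is correct and follows essentially the same route as the paper: the same three-term decomposition, uniform stability (Theorem \ref{thm:stab-bound-rcd-hp}) combined with the concentration inequality for uniformly stable algorithms for the estimation error, a Freedman/Bernstein martingale argument on the RCD one-step recursion with the crucial $1/d$ conditional-variance factor for the optimization error, and the identical balancing of $T$. The only cosmetic differences are that you route the third term through $\bw_S$, define the martingale difference as the full increment of $\|\bw_t-\bw_S\|_2^2$ rather than just the inner-product term, and bound $\|\nabla F_S(\bw_t)\|_2^2$ by self-bounding instead of Assumption \ref{ass:lipschitz}.
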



\section{Experiments\label{sec:exp}}
\begin{figure*}[htbp]
  \vspace*{-0.006\textheight}
  \centering
  \hspace*{-0.6cm}
  \subfigure[Ionosphere]{\includegraphics[width=0.32\textwidth,trim=0 2 4 3, clip]{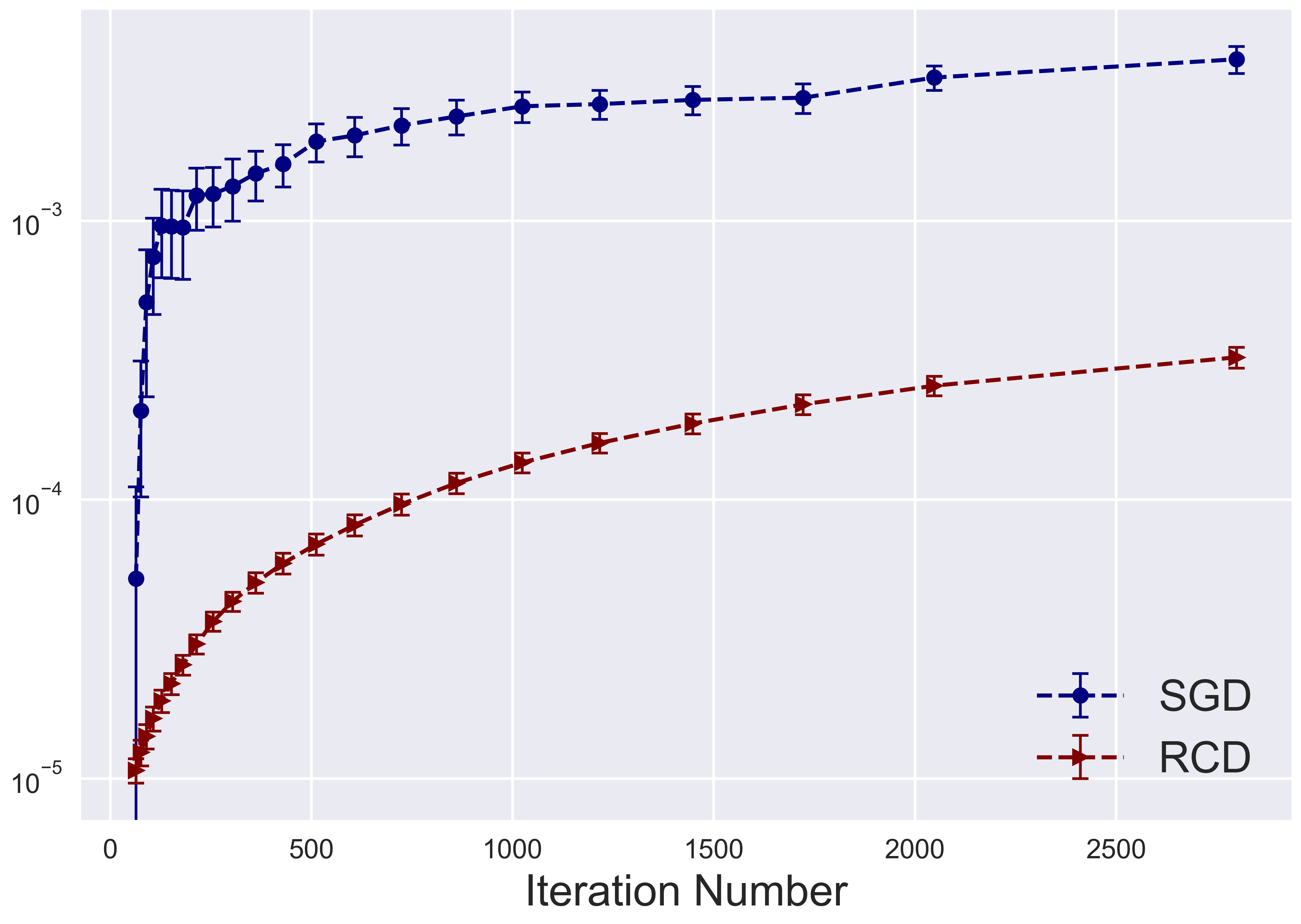}}\hspace*{0.2cm}
  \subfigure[Svmguide3]{\includegraphics[width=0.32\textwidth,trim=0 2 4 3, clip]{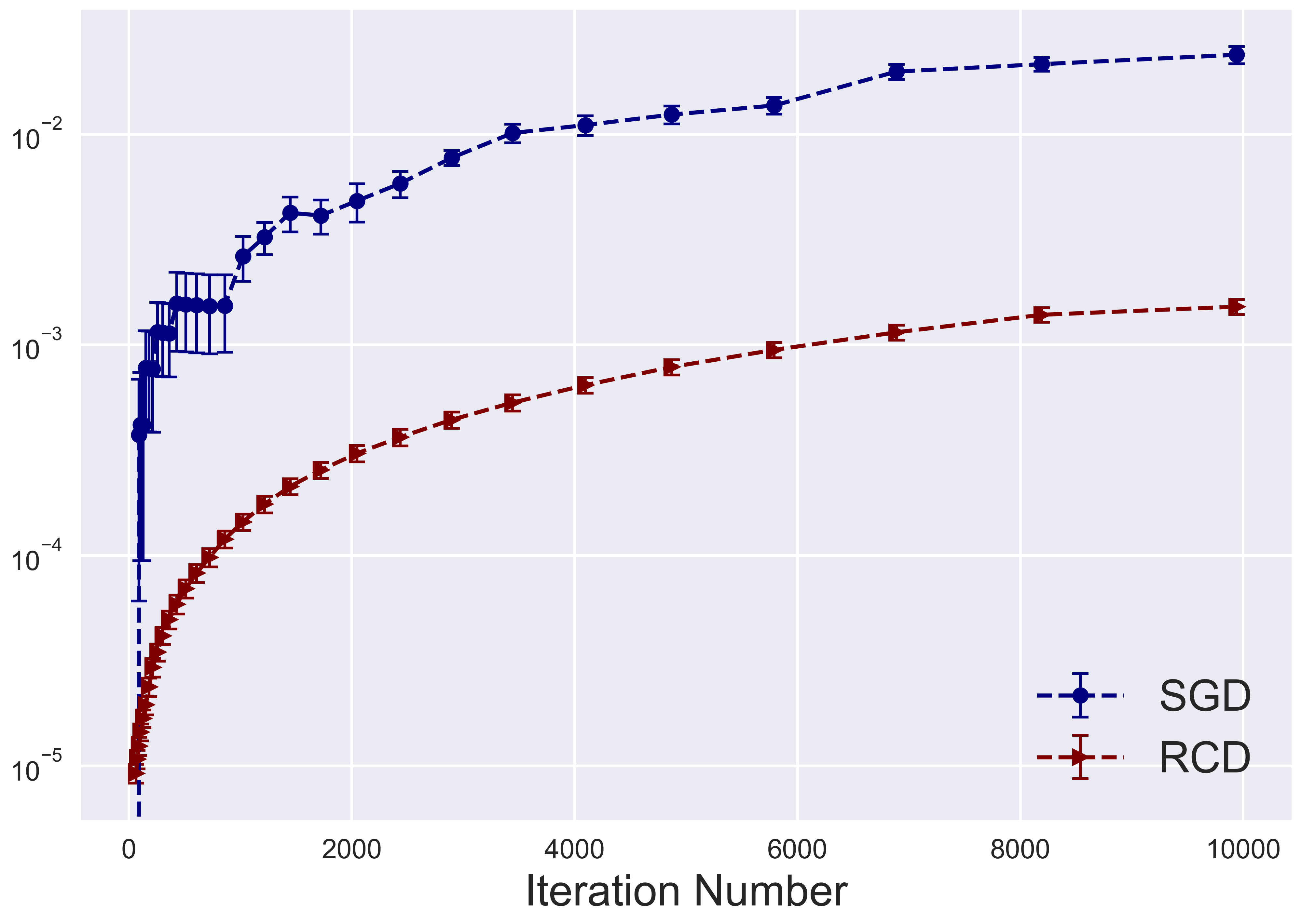}}\hspace*{0.2cm}
  \subfigure[MNIST]{\includegraphics[width=0.32\textwidth,trim=0 2 4 3, clip]{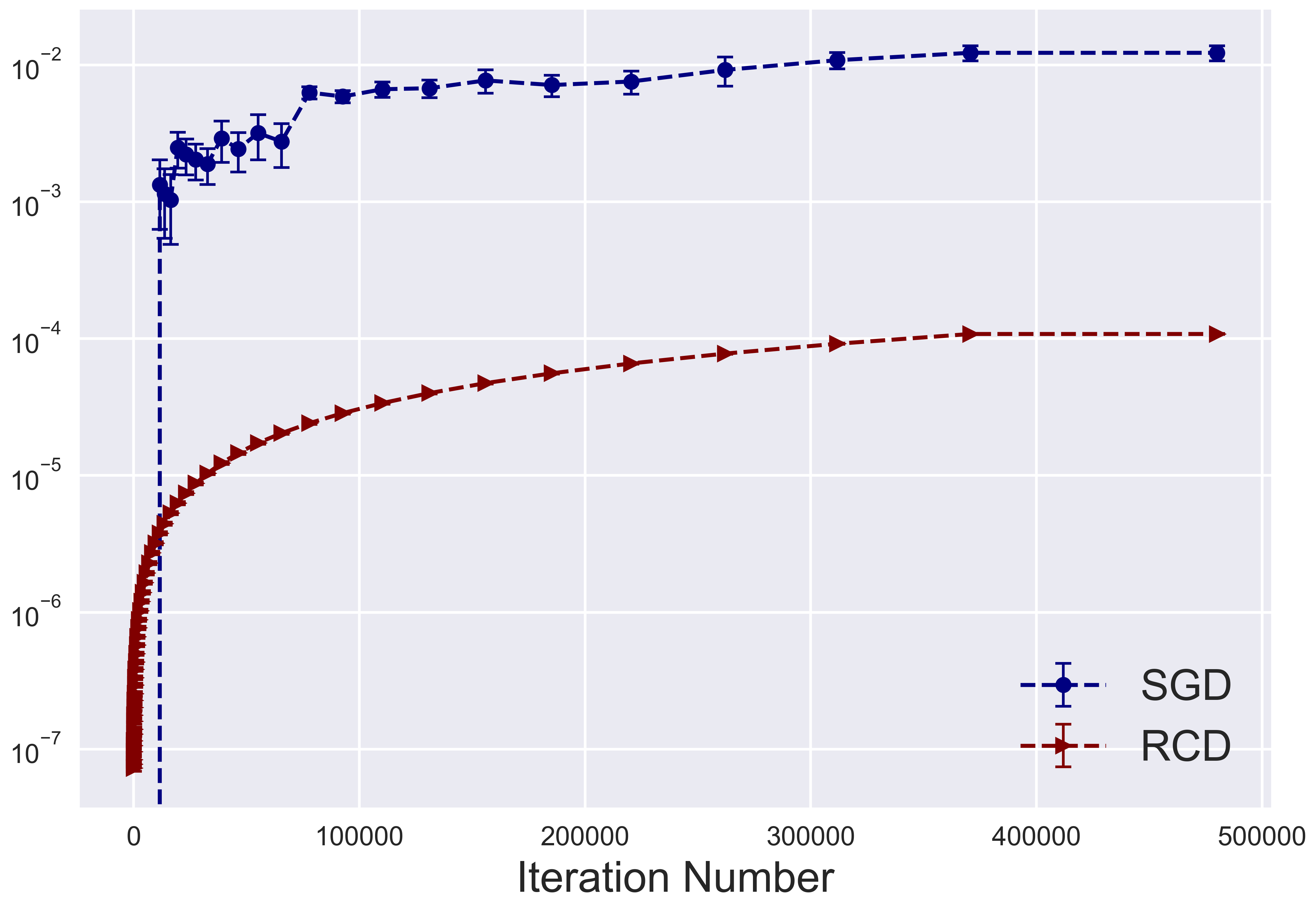}}
  \caption{Euclidean distance between two iterate sequences of RCD/SGD on neighboring datasets.\label{fig:stab}}
\end{figure*}
In this section, we present some experimental results to illustrate our stability bounds.
We follow the set up in \citet{hardt2016train}, i.e., we consider two neighboring datasets and run RCD/SGD with $\eta_t\equiv 0.01$ on these neighboring datasets to produce two iterate sequences $\{\bw_t\},\{\bw_t'\}$. We then plot the Euclidean distance between two iterate sequences as a function of the iteration number. We consider the least square regression for two datasets available at LIBSVM website~\citep{chang2011libsvm}: ionosphere, svmguide3 and MNIST. We repeat the experiments $100$ times and report the average of results. In Figure \ref{fig:stab} we plot the Euclidean distance as a function of the number of iterations.  Experimental results show that the Euclidean distance for RCD is much smaller than that with SGD, which is consistent with our theoretical results that RCD is more stable than SGD.

\section{Proof of Theorem \ref{thm:stab-bound-rcd}\label{sec:proof}}
The basic idea to prove Theorem \ref{thm:stab-bound-rcd} is to show how $\|\bw_{t+1}-\bw_{t+1}^{(i)}\|_2$ would change after a single iteration.
\begin{proof}[Proof of Theorem \ref{thm:stab-bound-rcd}]
We first prove Part (a). According to the update rule \eqref{RCD}, we know
\begin{align}
   & \|\bw_{t+1}-\bw_{t+1}^{(i)}\|_2 \notag\\
   & = \|\bw_t-\eta_t\nabla_{i_t}F_{S^{(i)}}(\bw_t)\be_{i_t}-\bw_t^{(i)}+\eta_t\nabla_{i_t}F_{S^{(i)}}(\bw_t^{(i)})\be_{i_t}\notag\\
   &+\eta_t\nabla_{i_t}F_{S^{(i)}}(\bw_t)\be_{i_t}-\eta_t\nabla_{i_t}F_S(\bw_t)\be_{i_t}\|_2 \notag\\
  & \leq \|\bw_t-\eta_t\nabla_{i_t}F_{S^{(i)}}(\bw_t)\be_{i_t}-\bw_t^{(i)}+\eta_t\nabla_{i_t}F_{S^{(i)}}(\bw_t^{(i)})\be_{i_t}\|_2\notag\\
  &+\eta_t\|\nabla_{i_t}F_{S^{(i)}}(\bw_t)\be_{i_t}-\nabla_{i_t}F_S(\bw_t)\be_{i_t}\|_2 \label{rcd-l1-01} \\
  & \leq
  \|\bw_t-\bw_t^{(i)}\|_2+\eta_t\|\nabla_{i_t}F_{S^{(i)}}(\bw_t)\be_{i_t}-\nabla_{i_t}F_S(\bw_t)\be_{i_t}\|_2,\label{rcd-l1-1}
\end{align}
where we have used Lemma \ref{lem:nonexpansive-rcd} in the last step.
Since $S$ and $S^{(i)}$ differ by the $i$-th example, we know
\begin{align}
  & |\nabla_{i_t}F_{S^{(i)}}(\bw_t)\!-\!\nabla_{i_t}F_S(\bw_t)|  = \frac{1}{n}\big|\nabla_{i_t}f(\bw_t;z_i)\!-\!\nabla_{i_t}f(\bw_t;z'_i)\big|\notag\\
  & \leq \frac{1}{n}\Big(\big|\nabla_{i_t}f(\bw_t;z_i)\big|+\big|\nabla_{i_t}f(\bw_t;z'_i)\big|\Big).\label{rcd-l1-7}
\end{align}
Note that $i_t$ is uniformly drawn from $[d]$, we further know
\begin{align}
  & \ebb_{i_t}\big[|\nabla_{i_t}F_{S^{(i)}}(\bw_t)-\nabla_{i_t}F_S(\bw_t)|\big] \notag\\
  & \leq \frac{1}{nd}\sum_{j=1}^{d}\Big(\big|\nabla_{j}f(\bw_t;z_i)\big|+\big|\nabla_{j}f(\bw_t;z'_i)\big|\Big)\notag \\
  & = \frac{1}{nd}\big(\|\nabla f(\bw_t;z_i)\|_1+\|\nabla f(\bw_t;z'_i)\|_1\big) \leq \frac{2G_1}{nd},\label{rcd-l1-2}
\end{align}
where we have used Assumption \ref{ass:lipschitz} in the last step. Plugging the above inequality back into \eqref{rcd-l1-1}, we get
\[
\ebb_A\big[\|\bw_{t+1}-\bw_{t+1}^{(i)}\|_2\big]
\leq \ebb_A\big[\|\bw_{t}-\bw_{t}^{(i)}\|_2\big]+\frac{2G_1\eta_t}{nd}.
\]
Applying the above inequality recursively gives the stated inequality. This completes the proof of Part (a).

We now prove Part (b).
According to \eqref{RCD}, we know
\begin{align}
   & \|\bw_{t+1}-\bw_{t+1}^{(i)}\|_2^2  \notag\\
   & = \|\bw_t-\eta_t\nabla_{i_t}F_S(\bw_t)\be_{i_t}-\bw_t^{(i)}+\eta_t\nabla_{i_t}F_{S^{(i)}}(\bw_t^{(i)})\be_{i_t}\|_2^2 \notag\\
   & = \|\bw_t-\eta_t\nabla_{i_t}F_{S^{(i)}}(\bw_t)\be_{i_t}-\bw_t^{(i)}+\eta_t\nabla_{i_t}F_{S^{(i)}}(\bw_t^{(i)})\be_{i_t}\notag\\
   &+\eta_t\nabla_{i_t}F_{S^{(i)}}(\bw_t)\be_{i_t}-\eta_t\nabla_{i_t}F_S(\bw_t)\be_{i_t}\|_2^2. \notag
\end{align}
By $(a+b)^2\leq (1+p)a^2+(1+1/p)b^2$ we know
\begin{align*}
&\|\bw_{t+1}-\bw_{t+1}^{(i)}\|_2^2\leq\\ &(1\!+\!p)\|\bw_t\!-\!\eta_t\nabla_{i_t}F_{S^{(i)}}(\bw_t)\be_{i_t}\!-\!\bw_t^{(i)}\!+\!\eta_t\nabla_{i_t}F_{S^{(i)}}(\bw_t^{(i)})\be_{i_t}\|_2^2\\
&+(1+1/p)\eta_t^2\|\nabla_{i_t}F_{S^{(i)}}(\bw_t)\be_{i_t}-\nabla_{i_t}F_S(\bw_t)\be_{i_t}\|_2^2.
\end{align*}
It then follows from Lemma \ref{lem:nonexpansive-rcd} that
\begin{multline}
  \|\bw_{t+1}-\bw_{t+1}^{(i)}\|_2^2 \leq (1+p)\|\bw_t-\bw_t^{(i)}\|_2^2\\
  +(1+1/p)\eta_t^2\|\nabla_{i_t}F_{S^{(i)}}(\bw_t)\be_{i_t}-\nabla_{i_t}F_S(\bw_t)\be_{i_t}\|_2^2,\label{rcd-1}
\end{multline}
Note that $S$ and $S^{(i)}$ differ by the $i$-th example, we can analyze analogously to \eqref{rcd-l1-7} and get
\begin{multline*}
  |\nabla_{i_t}F_{S^{(i)}}(\bw_t)-\nabla_{i_t}F_S(\bw_t)|^2 
  \\ \leq \frac{2}{n^2}\Big(\big|\nabla_{i_t}f(\bw_t;z_i)\big|^2+\big|\nabla_{i_t}f(\bw_t;z'_i)\big|^2\Big).
\end{multline*}
Since $i_t$ is uniformly drawn from $[d]$, we further know
\begin{align}
  & \ebb_{i_t}\big[|\nabla_{i_t}F_{S^{(i)}}(\bw_t)-\nabla_{i_t}F_S(\bw_t)|^2\big] \notag\\
  & \leq \frac{2}{n^2d}\sum_{j=1}^{d}\Big(\big|\nabla_{j}f(\bw_t;z_i)\big|^2+\big|\nabla_{j}f(\bw_t;z'_i)\big|^2\Big)\notag \\
  & = \frac{2}{n^2d}\big(\|\nabla f(\bw_t;z_i)\|_2^2+\|\nabla f(\bw_t;z'_i)\|_2^2\big)\notag \\
  & \leq \frac{4L}{n^2d}\big(f(\bw_t;z_i)+f(\bw_t;z'_i)\big),\notag
\end{align}
where we have used the self-bounding property according to the $L$-smoothness of $f$ in the last step. 
Putting the above inequality back into \eqref{rcd-1} implies
\begin{multline}\label{rcd-6}
\ebb_A\big[\|\bw_{t+1}-\bw_{t+1}^{(i)}\|_2^2\big]\leq
 (1+p)\ebb_A\big[\|\bw_t-\bw_t^{(i)}\|_2^2\big]\\+\frac{4(1+1/p)L\eta_t^2}{n^2d}\ebb_A\big[f(\bw_t;z_i)+f(\bw_t;z'_i)\big].
\end{multline}
It then follows that
\begin{multline*}
\ebb_A\big[\|\bw_{t+1}-\bw_{t+1}^{(i)}\|_2^2\big]\leq \\
\frac{4L(1+1/p)}{n^2d}\sum_{j=1}^{t}\big(1+p\big)^{t-j}\eta_j^2\ebb_A\big[f(\bw_j;z_i)+f(\bw_j;z'_i)\big].
\end{multline*}
Taking an average over $i$, we derive
\begin{align*}
& \frac{1}{n}\sum_{i=1}^{n}\ebb_A\big[\|\bw_{t+1}-\bw_{t+1}^{(i)}\|_2^2\big] \\
& \leq
\frac{4L(1+1/p)}{n^3d}\sum_{i=1}^{n}\!\sum_{j=1}^{t}\big(1\!+\!p\big)^{t-j}\eta_j^2\ebb_A\big[f(\bw_j;z_i)\!+\!f(\bw_j;z'_i)\big]\\
& = \frac{4L(1+1/p)}{n^2d}\sum_{j=1}^{t}\big(1+p\big)^{t-j}\eta_j^2\ebb_A\big[F_S(\bw_j)+F_{S'}(\bw_j)\big].
\end{align*}
The proof is complete.
\end{proof}

\section{Conclusions\label{sec:conclusion}}

In this paper, we initialize the generalization analysis of RCD based on the algorithmic stability.
We establish upper bounds of argument stability and uniform stability for RCD, which further imply the optimal generalization bounds of the order $O(1/\sqrt{n})$ and $O(1/n)$ in the convex and strongly convex case, respectively. We also consider nonconvex case and develop high-probability bounds. Remarkably, our analysis can leverage the low-noise assumption to yield optimistic generalization bounds $O(1/n)$ in the convex case without a bounded gradient assumption.


There are several interesting future directions. First, it would be interesting to extend our analysis to other variants, such as distributed RCD and RCD for structure optimization.  Second, here we assume the objectives are convex/strongly convex and each coordinate is sampled with the same probability during RCD updates. It is interesting to extend our discussion to nonconvex setting and importance sampling \citep{nesterov2012efficiency}, which are popular in modern machine learning.

\section*{Acknowledgments}
This work was supported in part by the National Natural Science Foundation of China (Grant Nos. 61903309, 61806091, 11771012, U1811461) and the Fundamental Research Funds for the Central Universities (JBK1806002).

\bibliographystyle{named}
\setlength{\bibsep}{0.111cm}

\newpage
\appendix
\numberwithin{equation}{section}
\numberwithin{theorem}{section}
\numberwithin{figure}{section}
\numberwithin{table}{section}
\renewcommand{\thesection}{{\Alph{section}}}
\renewcommand{\thesubsection}{\Alph{section}.\arabic{subsection}}
\renewcommand{\thesubsubsection}{\Roman{section}.\arabic{subsection}.\arabic{subsubsection}}
\setcounter{secnumdepth}{-1}
\setcounter{secnumdepth}{3}


\section{Some Lemmas}

We introduce some useful lemmas. The following lemma shows that the coordinate descent operator $\bw\mapsto\bw-\eta\nabla_ig(\bw)\be_i$ is non-expansive.
\begin{lemma}\label{lem:nonexpansive-rcd}
  Let $g:\rbb^d\mapsto\rbb$ be convex and have coordinate-wise Lipschitz continuous gradients with parameter $\widetilde{L}>0$. Then for any $\eta\leq2/\widetilde{L}$ and any $i\in[d]$ we have the following inequality for any $\bw$ and $\tilde{\bw}$
  \begin{equation}\label{nonexpansive-rcd-a}
  \|\bw-\eta\nabla_ig(\bw)\be_i-\tilde{\bw}+\eta\nabla_ig(\tilde{\bw})\be_i\|_2\leq\|\bw-\tilde{\bw}\|_2.
  \end{equation}
  Furthermore, if $g$ is $\sigma$-coordinate-wise strongly convex and $\eta\leq1/\widetilde{L}$, then
  \begin{multline}\label{nonexpansive-rcd-b}
  \|\bw-\eta\nabla_ig(\bw)\be_i-\tilde{\bw}+\eta\nabla_ig(\tilde{\bw})\be_i\|_2^2\\
  \leq \|\bw-\tilde{\bw}\|_2^2-\eta\sigma|w_i-\tilde{w}_i|^2,
  \end{multline}
  where $w_i$ denotes the $i$-th coordinate of $\bw\in\rbb^d$.
\end{lemma}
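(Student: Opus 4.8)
The plan is to exploit the fact that the operator $\bw\mapsto\bw-\eta\nabla_ig(\bw)\be_i$ modifies only the $i$-th coordinate, so that comparing $\|\bw-\eta\nabla_ig(\bw)\be_i-\tilde\bw+\eta\nabla_ig(\tilde\bw)\be_i\|_2$ with $\|\bw-\tilde\bw\|_2$ collapses to a one-dimensional inequality. Writing $\Delta=w_i-\tilde w_i$ and $D=\nabla_ig(\bw)-\nabla_ig(\tilde\bw)$, every coordinate $j\neq i$ of the difference vector equals $w_j-\tilde w_j$ and therefore cancels, leaving
\[
\big\|\bw-\eta\nabla_ig(\bw)\be_i-\tilde\bw+\eta\nabla_ig(\tilde\bw)\be_i\big\|_2^2-\|\bw-\tilde\bw\|_2^2=(\Delta-\eta D)^2-\Delta^2=\eta^2D^2-2\eta\Delta D.
\]
Hence \eqref{nonexpansive-rcd-a} is equivalent to the scalar bound $\eta D^2\le 2\Delta D$, and \eqref{nonexpansive-rcd-b} to $\eta D^2\le 2\Delta D-\sigma\Delta^2$. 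Everything now rests on a coordinate-wise co-coercivity estimate relating $D$ to $\Delta$.

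For the convex case I would establish $\Delta D\ge \widetilde L^{-1}D^2$. The natural route is to restrict to the $i$-th coordinate: introduce the auxiliary function $h(\bu)=g(\bu)-\nabla_ig(\tilde\bw)\,u_i$, which is still convex and still has coordinate-wise $\widetilde L$-Lipschitz gradients (only a linear term was subtracted) and satisfies $\nabla_ih(\tilde\bw)=0$. Convexity of $h$ along the $i$-th coordinate line makes $\tilde\bw$ a minimizer of $s\mapsto h(\tilde\bw+s\be_i)$, while the coordinate-wise descent property of Definition \ref{def:coo-smooth} applied at $\bw$ yields $\min_s h(\bw+s\be_i)\le h(\bw)-(2\widetilde L)^{-1}(\nabla_ih(\bw))^2$. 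Comparing these two one-dimensional estimates and recalling $\nabla_ih(\bw)=D$ gives $\Delta D\ge\widetilde L^{-1}D^2$; since $\eta\le 2/\widetilde L$ we obtain $2\Delta D\ge (2/\widetilde L)D^2\ge\eta D^2$, which is \eqref{nonexpansive-rcd-a}.

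For the strongly convex refinement I would upgrade this estimate using Assumption \ref{ass:sc}. A function that is simultaneously $\sigma$-strongly convex and $\widetilde L$-smooth along the $i$-th coordinate satisfies the sharper co-coercivity $\Delta D\ge\frac{\sigma\widetilde L}{\sigma+\widetilde L}\Delta^2+\frac{1}{\sigma+\widetilde L}D^2$. Substituting this into $2\Delta D-\sigma\Delta^2$ and using $\sigma\le\widetilde L$ leaves a nonnegative multiple of $\Delta^2$ together with $\frac{2}{\sigma+\widetilde L}D^2\ge\widetilde L^{-1}D^2\ge\eta D^2$ whenever $\eta\le1/\widetilde L$, which is exactly \eqref{nonexpansive-rcd-b} after multiplying the scalar inequality through by $\eta$.

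The step I expect to be the main obstacle is the coordinate-wise co-coercivity itself. The delicate point is that the descent inequality and the minimality of $\tilde\bw$ only control behavior along a single coordinate line, whereas $\nabla_ig$ is assumed Lipschitz with respect to its own coordinate but not with respect to the others; making the comparison of $h(\tilde\bw)$ with the one-dimensional minimum at $\bw$ legitimate is where the argument must be handled with care, and it is here that the precise relation between $\bw$ and $\tilde\bw$ enters. Once this estimate is secured, the remaining steps are routine manipulations of the scalar inequalities displayed above.
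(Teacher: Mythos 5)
Your reduction of \eqref{nonexpansive-rcd-a} to the scalar inequality $\eta D^2\le 2\Delta D$ (and of \eqref{nonexpansive-rcd-b} to $\eta D^2\le 2\Delta D-\sigma\Delta^2$) is correct, and it is essentially the same first step as in the paper's proof, which likewise isolates the $i$-th coordinate. But the obstacle you flag at the end is not a delicate point to be ``handled with care''; it is a genuine, unfixable gap. In your auxiliary argument, $\tilde\bw$ minimizes $h$ only along the line $\{\tilde\bw+s\be_i:s\in\rbb\}$, whereas the coordinate descent step from $\bw$ lands on the different line $\{\bw+s\be_i:s\in\rbb\}$, so the two one-dimensional estimates cannot be chained. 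Worse, the coordinate-wise co-coercivity $\Delta D\ge\widetilde L^{-1}D^2$ on which everything rests is simply false for general pairs: take $d=2$, $g(\bw)=\frac12(w_1+w_2)^2$ (convex, with coordinate-wise constant $\widetilde L=1$), $i=1$, $\bw=(0,1)$, $\tilde\bw=(0,0)$. Then $\Delta=w_1-\tilde w_1=0$ while $D=\nabla_1g(\bw)-\nabla_1g(\tilde\bw)=1$, so $2\Delta D=0<\eta D^2$ for every $\eta>0$; correspondingly
\[
\|\bw-\eta\nabla_1g(\bw)\be_1-\tilde\bw+\eta\nabla_1g(\tilde\bw)\be_1\|_2=\|(-\eta,1)\|_2=\sqrt{1+\eta^2}>1=\|\bw-\tilde\bw\|_2 .
\]
Thus the statement itself, not merely your route to it, fails: the map $\bw\mapsto\bw-\eta\nabla_ig(\bw)\be_i$ is non-expansive only on pairs of points that agree outside the $i$-th coordinate, because once they differ elsewhere $\Delta$ can vanish while $D$ does not.

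You should also know that the paper's own proof commits exactly the error you were worried about: it freezes the coordinates $w_j$, $j\ne i$, views $g$ as a univariate function of the $i$-th coordinate, and applies the one-dimensional non-expansiveness of Lemma \ref{lem:nonexpansive}; but the derivative of that univariate function at $\tilde w_i$ is $\nabla_ig(w_1,\dots,\tilde w_i,\dots,w_d)$, not $\nabla_ig(\tilde\bw)$, and the two coincide only when $\bw$ and $\tilde\bw$ agree off coordinate $i$ --- which is not the case where the lemma is invoked with $\bw=\bw_t$ and $\tilde\bw=\bw_t^{(i)}$. So your instinct about where the difficulty lies was exactly right; the correct conclusion is not that a cleverer co-coercivity argument is needed, but that the lemma requires an additional hypothesis (e.g., that $\bw-\tilde\bw$ is supported on the $i$-th coordinate) or must be weakened to an expansiveness bound of the form $\|\bw-\tilde\bw\|_2+\eta|D|$ with $|D|$ controlled by the full smoothness constant, as is effectively done in the nonconvex analysis of Theorem \ref{thm:stab-nonconvex}.
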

To prove Lemma \ref{lem:nonexpansive-rcd}, we introduce the following lemma due to \citet{hardt2016train}.
\begin{lemma}[\citealt{hardt2016train}\label{lem:nonexpansive}]
Assume the function $g:\rbb\mapsto\rbb$ is convex and $L$-smooth. Then for all $w$, $w'\in\rbb$ and $\eta\leq2/L$ we know
  \begin{equation}\label{non-expansive-rcd-a}
  |w-\eta \nabla g(w)-w'+\eta \nabla g(w')|\leq |w-w'|.
  \end{equation}
  Furthermore, if $g$ is $\sigma$-strongly convex and $\eta\leq 1/L$ there holds
  \begin{equation}\label{non-expansive-rcd-b}
  |w-\eta \nabla g(w)-w'+\eta \nabla g(w')|^2\leq (1-\eta\sigma)|w-w'|^2.
  \end{equation}
\end{lemma}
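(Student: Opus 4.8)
The plan is to derive both inequalities from the single fundamental property that the gradient of a convex $L$-smooth function on $\rbb$ is \emph{co-coercive}, namely
\[
(w-w')\big(\nabla g(w)-\nabla g(w')\big)\geq \tfrac{1}{L}\big|\nabla g(w)-\nabla g(w')\big|^2\quad\text{for all }w,w'\in\rbb.
\]
First I would record this estimate. It follows from the standard consequence of convexity and $L$-smoothness that $g(w')\geq g(w)+\nabla g(w)(w'-w)+\tfrac{1}{2L}|\nabla g(w)-\nabla g(w')|^2$; writing this inequality together with the copy obtained by interchanging $w$ and $w'$ and adding the two cancels the function values and leaves exactly the co-coercivity bound. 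This is a purely one-dimensional computation, so no multivariate machinery is needed.

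For Part (a) (inequality \eqref{non-expansive-rcd-a}) I would square the left-hand side and expand
\[
\big|w-\eta\nabla g(w)-w'+\eta\nabla g(w')\big|^2
=|w-w'|^2-2\eta(w-w')\big(\nabla g(w)-\nabla g(w')\big)+\eta^2\big|\nabla g(w)-\nabla g(w')\big|^2.
\]
Applying co-coercivity to the cross term bounds the right-hand side by $|w-w'|^2-\eta\big(\tfrac{2}{L}-\eta\big)\big|\nabla g(w)-\nabla g(w')\big|^2$. Since $\eta\leq 2/L$ the subtracted quantity is nonnegative, so the whole expression is at most $|w-w'|^2$; taking square roots gives \eqref{non-expansive-rcd-a}.

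For Part (b) (inequality \eqref{non-expansive-rcd-b}) I would reduce to the convex case by the shift $h:=g-\tfrac{\sigma}{2}(\cdot)^2$. Since $g$ is $\sigma$-strongly convex and $L$-smooth, $h$ is convex and $(L-\sigma)$-smooth, and the update rewrites as $w-\eta\nabla g(w)=(1-\eta\sigma)w-\eta\nabla h(w)$. Setting $a:=w-w'$ and $b:=\nabla h(w)-\nabla h(w')$, I would expand
\[
\big|(1-\eta\sigma)a-\eta b\big|^2=(1-\eta\sigma)^2a^2-2\eta(1-\eta\sigma)ab+\eta^2 b^2,
\]
and then use the co-coercivity of $h$ in the form $b^2\leq(L-\sigma)\,ab$ (together with $ab\geq 0$) to collect the $b$-terms into $(1-\eta\sigma)^2a^2+\eta\big(\eta(L+\sigma)-2\big)ab$. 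Because $\sigma\leq L$ and $\eta\leq 1/L$ we have $\eta(L+\sigma)\leq 2$, so the coefficient of $ab$ is nonpositive while $ab\geq 0$; hence the bound is at most $(1-\eta\sigma)^2a^2\leq(1-\eta\sigma)a^2$, using $0\leq 1-\eta\sigma\leq 1$, which is \eqref{non-expansive-rcd-b}.

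The main obstacle is the sign bookkeeping in Part (b): one must verify simultaneously that $1-\eta\sigma\geq 0$ (so the squared factor behaves monotonically) and that $\eta(L+\sigma)-2\leq 0$ under the stepsize restriction $\eta\leq 1/L$, both of which rely on the implicit relation $\sigma\leq L$ inherited from $g$ being both $\sigma$-strongly convex and $L$-smooth. Once these elementary inequalities are confirmed, everything reduces to the co-coercivity estimate established at the outset. The degenerate case $L=\sigma$, where $\nabla h$ is constant and $b=0$, is handled trivially since the $b$-terms vanish.
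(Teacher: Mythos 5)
Your proposal is correct, but it cannot be compared line-by-line with a proof in the paper, because the paper does not prove this lemma at all: it is imported verbatim from \citet{hardt2016train}, and the paper's ``proof'' is just that citation. Measured against the argument in the cited source, your Part (a) is essentially the same: expand the square and kill the cross term with the co-coercivity inequality $(w-w')(\nabla g(w)-\nabla g(w'))\geq \frac{1}{L}|\nabla g(w)-\nabla g(w')|^2$, then use $\eta\leq 2/L$. Your Part (b), however, takes a genuinely different route. Hardt et al.\ invoke the joint smoothness/strong-convexity inequality (Nesterov's $\langle\nabla g(w)-\nabla g(w'),w-w'\rangle\geq\frac{\sigma L}{\sigma+L}|w-w'|^2+\frac{1}{\sigma+L}|\nabla g(w)-\nabla g(w')|^2$), which yields contraction factor $1-\tfrac{2\eta\sigma L}{\sigma+L}$ for $\eta\leq 2/(\sigma+L)$; the form \eqref{non-expansive-rcd-b} then follows since $\sigma\leq L$ implies $1-\tfrac{2\eta\sigma L}{\sigma+L}\leq 1-\eta\sigma$ and $1/L\leq 2/(\sigma+L)$. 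You instead subtract the quadratic $\tfrac{\sigma}{2}w^2$, reduce to the convex $(L-\sigma)$-smooth function $h$, and reuse plain co-coercivity; your sign bookkeeping ($\sigma\leq L$, hence $\eta(L+\sigma)\leq 2$ and $0\leq 1-\eta\sigma\leq 1$, with $ab\geq 0$ by monotonicity) is all verified correctly, and the degenerate case $L=\sigma$ is properly dispatched. What your route buys is self-containedness and economy: only one co-coercivity fact is needed for both parts, at the price of a slightly weaker intermediate contraction ($(1-\eta\sigma)^2$ sharpened only to $1-\eta\sigma$) compared with the $\tfrac{\sigma L}{\sigma+L}$-type constant in the cited proof, which is immaterial for the paper's use of the lemma.
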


\begin{proof}[Proof of Lemma \ref{lem:nonexpansive-rcd}]
  Recall that $w_j$ denotes the $j$-th coordinate of $\bw\in\rbb^d$.
  It is clear that
  \begin{multline*}
   \|\bw-\eta\nabla_ig(\bw)\be_i-\tilde{\bw}+\eta\nabla_ig(\tilde{\bw})\be_i\|_2^2
     \\= \sum_{j\neq i}|w_j-\tilde{w}_j|^2+|w_i-\eta\nabla_ig(\bw)-\tilde{w}_i+\eta\nabla_ig(\tilde{\bw})|^2.
  \end{multline*}
  We can fix $w_j$ for all $j\neq i$ and then $g$ can be considered as a univariate function of $w_i$.
 Applying  \eqref{non-expansive-rcd-a} in Lemma \ref{lem:nonexpansive} to this univariate function, we have
  \[
  |w_i-\eta\nabla_ig(\bw)-\tilde{w}_i+\eta\nabla_ig(\tilde{\bw})|\leq |w_i-\tilde{w}_i|.
  \]
Combining the above two inequalities together, we  derive the stated inequality \eqref{nonexpansive-rcd-a}.

We now turn to \eqref{nonexpansive-rcd-b} in a strongly convex setting. Similarly,  \eqref{non-expansive-rcd-b} in Lemma \ref{lem:nonexpansive} implies that
  \begin{multline*}
  \|\bw-\eta\nabla_ig(\bw)\be_i-\tilde{\bw}+\eta\nabla_ig(\tilde{\bw})\be_i\|_2^2\leq \\
  \sum_{j\neq i}|w_j-\tilde{w}_j|^2+(1-\eta\sigma)|w_i-\tilde{w}_i|^2.
  \end{multline*}
This completes the proof.
\end{proof}
Smooth functions enjoy the self-bounding property, which means that the gradients can be bounded by the function values~\citep{srebro2010smoothness}.
\begin{lemma}\label{lem:self-bounding}
  If $g:\wcal\mapsto\rbb_+$ is $L$-smooth, then for all $\bw\in\wcal$ there holds
  $
  \|\nabla g(\bw)\|_2^2\leq 2Lg(\bw).
  $
\end{lemma}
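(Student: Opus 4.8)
The plan is to use the one-step gradient descent point as a probe and combine the descent inequality for smooth functions with the nonnegativity of $g$. First I would establish the standard descent lemma implied by $L$-smoothness: for any $\bw,\bw'\in\wcal$ (with the segment joining them lying in $\wcal$),
\[
g(\bw')\leq g(\bw)+\langle\nabla g(\bw),\bw'-\bw\rangle+\frac{L}{2}\|\bw'-\bw\|_2^2.
\]
This follows by writing $g(\bw')-g(\bw)=\int_0^1\langle\nabla g(\bw+t(\bw'-\bw)),\bw'-\bw\rangle\,dt$, adding and subtracting $\langle\nabla g(\bw),\bw'-\bw\rangle$, and bounding the deviation of the gradient along the segment through the Lipschitz property $\|\nabla g(\bw+t(\bw'-\bw))-\nabla g(\bw)\|_2\leq Lt\|\bw'-\bw\|_2$, followed by Cauchy--Schwarz and integration of $t$ over $[0,1]$.

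Next I would instantiate this inequality at the gradient step $\bw'=\bw-\frac{1}{L}\nabla g(\bw)$, so that $\bw'-\bw=-\frac{1}{L}\nabla g(\bw)$. Substituting yields
\[
g(\bw')\leq g(\bw)-\frac{1}{L}\|\nabla g(\bw)\|_2^2+\frac{1}{2L}\|\nabla g(\bw)\|_2^2=g(\bw)-\frac{1}{2L}\|\nabla g(\bw)\|_2^2.
\]
Finally, since $g$ maps into $\rbb_+$, the left-hand side is nonnegative, hence $0\leq g(\bw)-\frac{1}{2L}\|\nabla g(\bw)\|_2^2$, which rearranges to the claimed bound $\|\nabla g(\bw)\|_2^2\leq 2Lg(\bw)$.

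The only delicate point, which I would flag as the main obstacle, is ensuring that the probe point $\bw-\frac{1}{L}\nabla g(\bw)$ and the segment connecting it to $\bw$ both remain inside the domain $\wcal$, so that the descent inequality applies and $g(\bw')\geq 0$ is legitimate. This is automatic when $\wcal=\rbb^d$, and more generally when $\wcal$ is convex with the smoothness and nonnegativity holding on all of $\wcal$; in the present setting the parameter space carries this structure, so no additional argument beyond convexity of the domain is needed. Everything else is a routine substitution.
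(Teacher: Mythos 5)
Your proof is correct and is the standard argument for the self-bounding property: apply the descent inequality at the probe point $\bw-\frac{1}{L}\nabla g(\bw)$ and invoke nonnegativity of $g$. The paper does not actually prove this lemma itself---it cites it from Srebro et al.\ (2010)---and your derivation is precisely the canonical one used there, including the correct observation that the only genuine hypothesis beyond smoothness and nonnegativity is that the gradient step stays in the domain (automatic for $\wcal=\rbb^d$).
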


\section{Proof of Theorem \ref{thm:stab-bound-rcd-sc}}

\begin{proof}[Proof of Theorem \ref{thm:stab-bound-rcd-sc}]
By \eqref{nonexpansive-rcd-b}, we know ($w_{t, i_t}$ is the $i_t$-th component of $\bw_t$ and $w_{t,i_t}^{(i)}$ is the $i_t$-th component of $\bw_t^{(i)}$)
\begin{align*}
  & \ebb_{i_t}\big[\|\bw_t-\eta_t\nabla_{i_t}F_{S^{(i)}}(\bw_t)\be_{i_t}-\bw_t^{(i)}+\eta_t\nabla_{i_t}F_{S^{(i)}}(\bw_t^{(i)})\be_{i_t}\|_2^2\big]\\
  & \leq \|\bw_t-\bw_t^{(i)}\|_2^2
  -\eta_t\sigma\ebb_{i_t}[|w_{t, i_t}-{w}_{t,i_t}^{(i)}|^2]\\
  & =(1-\eta_t\sigma/d)\|\bw_t-\bw_t^{(i)}\|_2^2
  \leq (1\!-\!\eta_t\sigma/(2d))^2\|\bw_t\!-\!\bw_t^{(i)}\|_2^2,
\end{align*}
where we have used
\[
\ebb_{i_t}[|w_{t, i_t}-w_{t,i_t}^{(i)}|^2]=\frac{1}{d}\sum_{k=1}^{d}[|w_t^{(k)}-w_{t,k}^{(i)}|^2]=\frac{1}{d}\|\bw_t-\bw_t^{(i)}\|_2^2
\]
and $1-a\leq(1-a/2)^2$. It then follows that
\begin{multline*}
\ebb_{i_t}\big[\|\bw_t-\eta_t\nabla_{i_t}F_{S^{(i)}}(\bw_t)\be_{i_t}-\bw_t^{(i)}+\eta_t\nabla_{i_t}F_{S^{(i)}}(\bw_t^{(i)})\be_{i_t}\|_2\big]\\
\leq (1-\eta_t\sigma/(2d))\|\bw_t-\bw_t^{(i)}\|_2.
\end{multline*}
Putting the above inequality into \eqref{rcd-l1-01} and using \eqref{rcd-l1-2}, we get
\[
\ebb_A[ \|\bw_{t+1}-\bw_{t+1}^{(i)}\|_2]\leq (1-\eta_t\sigma/(2d))\ebb_A\big[\|\bw_t-\bw_t^{(i)}\|_2\big]+\frac{2G_1\eta_t}{nd}.
\]
Applying the above inequality recursively gives
\[
\ebb_A[ \|\bw_{t+1}-\bw_{t+1}^{(i)}\|_2]\leq \frac{2G_1}{nd}\sum_{j=1}^{t}\eta_j\prod_{k=j+1}^{t}(1-\eta_k\sigma/(2d)).
\]
Note that
\begin{align*}
  & \sum_{j=1}^{t}\eta_j\sigma/(2d)\prod_{k=j+1}^{t}(1-\eta_k\sigma/(2d))\\
  & = \sum_{j=1}^{t}\big(1-\big(1-\eta_j\sigma/(2d)\big)\big)\prod_{k=j+1}^{t}(1-\eta_k\sigma/(2d))\\
  & = \sum_{j=1}^{t}\prod_{k=j+1}^{t}(1-\eta_k\sigma/(2d))-\sum_{j=1}^{t}\prod_{k=j}^{t}(1-\eta_k\sigma/(2d))\\
  & = 1-\prod_{k=1}^{t}(1-\eta_k\sigma/(2d))\leq1.
\end{align*}
We can combine the above two inequalities together and get the stated bound.
The proof is complete.
\end{proof}

\section{Proofs of Generalization Bounds\label{sec:proof-generalization}}
In this section, we present the proofs of generalization bounds for RCD.
\subsection{Convex Case}
We first consider generalization bounds in the convex case.
\begin{proof}[Proof of Theorem \ref{thm:gen-rcd-l1}]
  Let $A(S)=\bw_T$. It follows from Part (a) of {Lemma \ref{thm:gen-model-stab} }and Eq. \eqref{stab-bound-rcd-l1} that
  \[
  \ebb_{{S,A}}\big[F(\bw_T)-F_S(\bw_T)\big]\leq \frac{2G_1G_2}{nd}\sum_{t=1}^{T}\eta_t.
  \]
  This together with the optimization error bounds in \eqref{rcd-a} with $\bw=\bw^*$ and the error decomposition \eqref{decomposition} gives
  \begin{multline*}
    \ebb_{{S,A}}\big[F(\bw_T)-F(\bw^*)\big]\leq \frac{2G_1G_2}{nd}\sum_{t=1}^{T}\eta_t\\
    +\frac{d\big(\|\bw_1-\bw^*\|_2^2  + 2\eta_1\ebb[F_S(\bw_1)]\big)}{2\sum_{t=1}^{T}\eta_t}.
  \end{multline*}
  Since {$\eta_t\equiv\eta$} and $\ebb_{{S,A}}[F_S(\bw_1)]=F(\bw_1)$, we further get the stated bound \eqref{gen-rcd-l1-b}.
  The second bound then follows from the choice of $T$. The proof is complete.
\end{proof}
\begin{proof}[Proof of Theorem \ref{thm:gen-rcd}]
Taking expectation on both sides of \eqref{stab-bound-rcd} and noticing $\ebb_{S'}[F_{S'}(\bw_j)]=F(\bw_j)$ since $\bw_j$ is independent of $S'$, we derive
\begin{multline*}
\ebb_{S,S',A}\Big[\frac{1}{n}\sum_{i=1}^{n}\|\bw_{t+1}-\bw_{t+1}^{(i)}\|_2^2\Big]\\
    \leq  \frac{4L(1+1/p)}{n^2d}\sum_{j=1}^{t}\big(1+p\big)^{t-j}\eta_j^2\ebb_{S,A}\big[F_S(\bw_j)+F(\bw_j)\big].
\end{multline*}
We plug the above inequality into Part (b) of {Lemma \ref{thm:gen-model-stab} } with $A(S)=\bw_{t+1}$, and derive
\begin{multline*}
  \ebb_{S,A}\big[F(\bw_{t+1})-(1+\gamma^{-1})F_S(\bw_{t+1})\big]\leq\\
  \frac{2(1+1/p)L^2(1+\gamma)}{n^2d}
  \sum_{j=1}^{t}\big(1+p\big)^{t-j}\eta_j^2\ebb_{S,A}
  \big[F(\bw_j)\\-(1+\gamma^{-1})F_S(\bw_j)+(2+\gamma^{-1})F_S(\bw_j)\big].
\end{multline*}
  Let \[\delta_j=\max\big\{\ebb_{S,A}[F(\bw_j)-(1+\gamma^{-1})F_S(\bw_j)],0\big\}, \quad\forall j\in\nbb.\]
  Then, it follows that
\begin{align*}
    & \delta_{t+1} -\frac{2(1+1/p)L^2(1+\gamma)(2+\gamma^{-1})(1+p)^{t-1}}{n^2d}\sum_{j=1}^{t}\eta_j^2\\
    & \times \ebb_{S,A}[F_S(\bw_j)]\leq \frac{2(1+1/p)L^2(1+\gamma)(1+p)^{t-1}}{n^2d}\sum_{j=1}^{t}\eta_j^2\delta_j\\
    &\leq \frac{2(1+1/p)L^2(1+\gamma)(1+p)^{t-1}}{n^2d}\sum_{j=1}^{t}\eta_j^2\max_{1\leq\tilde{j}\leq t+1}\delta_{\tilde{j}}.
\end{align*}
Since the above inequality holds for all $t$ and the above upper bound of $\{\delta_t\}$ is an increasing function of $t$, we can set $p=1/t$ and derive
\begin{multline*}
  \max_{1\leq\tilde{j}\leq t+1}\delta_{\tilde{j}}\leq\frac{2(1+t)L^2(1+\gamma)e}{n^2d}\sum_{j=1}^{t}\eta_j^2\max_{1\leq\tilde{j}\leq t+1}\delta_{\tilde{j}}\\
  +
  \frac{2(1+t)L^2(1+\gamma)(2+\gamma^{-1})e\sum_{j=1}^{t}\eta_j^2\ebb_{S,A}[F_S(\bw_j)]}{n^2d},
\end{multline*}
where we have used $(1+1/t)^t\leq e$.
According to the assumption $(1+t)L^2(1+\gamma)e\sum_{j=1}^{t}\eta_j^2\leq n^2d/4$, we further get the following inequality for all $t=1,\ldots,T$
\begin{multline*}
  \max_{1\leq\tilde{j}\leq t+1}\delta_{\tilde{j}}\leq\frac{1}{2}\max_{1\leq\tilde{j}\leq t+1}\delta_{\tilde{j}}\\
  +
  \frac{2(1+t)L^2(1+\gamma)(2+\gamma^{-1})e\sum_{j=1}^{t}\eta_j^2\ebb_{S,A}[F_S(\bw_j)]}{n^2d},
\end{multline*}
and therefore
\begin{multline*}
\ebb_{S,A}[F(\bw_t)-F_S(\bw_t)]\leq \\
\gamma^{-1}\big(\ebb_{S,A}[F_S(\bw_t)]-\ebb_{S,A}[F_S(\bw)]+\ebb_{S,A}[F_S(\bw)]\big) + \\
\frac{4(1+t)L^2(1+\gamma)(2+\gamma^{-1})e}{n^2d}\sum_{j=1}^{t}\eta_j^2\ebb_{S,A}[F_S(\bw_j)].
\end{multline*}
Summing both sides by ${\ebb_{S,A}}[F_S(\bw_t)-F_S(\bw)]$ and using the decomposition $F_S(\bw_j)=F_S(\bw_j)-F_S(\bw)+F_S(\bw)$,
it then follows from \eqref{rcd-a} and \eqref{rcd-b} that
\begin{multline*}
  \ebb_{S,A}[F(\bw_t)-F_S(\bw)] \leq \gamma^{-1}\ebb_{S,A}[F_S(\bw)]\\
  + \frac{d(1+\gamma^{-1})}{2\sum_{j=1}^{t}\eta_j}\Big(\|\bw_1-\bw\|_2^2  + 2\eta_1F(\bw_1)\Big)+\\
  \frac{2(1+t)L^2(1+\gamma)(2+\gamma^{-1})e}{n^2}\Big(\eta_1\|\bw_1-\bw\|_2^2 + 2\eta_1^2F(\bw_1)\Big)\\
  +\frac{4(1+t)L^2(1+\gamma)(2+\gamma^{-1})e}{n^2d}\sum_{j=1}^{t}\eta_j^2\ebb_{S,A}[F_S(\bw)].
\end{multline*}
We can set $\bw=\bw^*$ in the above inequality and get the stated bound. The proof is complete.

\end{proof}

\begin{proof}[Proof of Corollary \ref{cor:gen-rcd}]
  We first prove Part (a).
  For the constant step size {$\eta_t\equiv\eta$}, the generalization bound in \eqref{gen-rcd} becomes
  \begin{multline}\label{rcd-5}
    \ebb_{S,A}[F(\bw_T)-F_S(\bw^*)]=O\Big(\frac{1}{\gamma}+\frac{L^2(\gamma+\gamma^{-1}) T^2}{n^2d}\Big)F(\bw^*)\\
    +O\Big(\frac{d+d\gamma^{-1}}{T}+\frac{L^2(\gamma+\gamma^{-1}) T}{n^2}\Big).
  \end{multline}
  We choose $\gamma=\frac{n\sqrt{d}}{TL}$. If \[(1+T)(L+n\sqrt{d}/T)LeT\eta^2\leq n^2d/4,\]
  then \eqref{rcd-5} holds and becomes
  \[\ebb_{S,A}[F(\bw_T)-F_S(\bw^*)]=O\Big(\frac{LT}{n\sqrt{d}} + { \big(\frac{LT}{n\sqrt{d}}\big)^3} +\frac{d}{T}+{\frac{L\sqrt{d}}{n}}\Big).\] We can further choose $T\asymp\sqrt{n}d^{\frac{3}{4}}$ and use $d=O(n^2)$ to derive \[\ebb_{S,A}[F(\bw_T)-F_S(\bw^*)]=O(d^{\frac{1}{4}}n^{-\frac{1}{2}}).\]

  We now turn to Part (b).
  If $F(\bw^*)=O(d^{\frac{1}{2}}Ln^{-1})$, we choose $\gamma=1$. If $(1+T)L^2eT\eta^2\leq n^2d/8$,
  \eqref{rcd-5} holds and becomes \[\ebb_{S,A}[F(\bw_T)-F_S(\bw^*)]=O\Big(\frac{L^3T^2}{n^3d^{1/2}}+\frac{d}{T}+\frac{L^2T}{n^2}+{\frac{L\sqrt{d}}{n}}\Big).\] We can further choose $T\asymp n\sqrt{d}/L$ and derive \[\ebb_{S,A}[F(\bw_T)-F_S(\bw^*)]=O(d^{\frac{1}{2}}Ln^{-1}).\]
\end{proof}

\subsection{Strongly Convex Case}
We now consider the generalization bounds for strongly convex objective functions. 
\begin{proof}[Proof of Theorem \ref{thm:gen-bound-rcd-sc}]
Let $A(S)=\bw_{T}$.  It follows from Part (a) of Theorem \ref{thm:gen-model-stab} and Eq. \eqref{stab-bound-rcd-sc} that
  \[
  \ebb_{{S,A}}\big[F(\bw_{T})-F_S(\bw_{T})\big]\leq \frac{4G_1G_2}{n\sigma}.
  \]
  This together with the optimization error bounds in \eqref{rcd-c} and the error decomposition \eqref{decomposition} gives
  \begin{multline*}
    \ebb_{{S,A}}\big[F(\bw_{T})-F(\bw^*)\big]\leq
    \frac{4G_1G_2}{n\sigma}\\+\big(1-\eta\sigma/d\big)^{T-1}\ebb\big[F_S(\bw_{1})-F_S(\bw_S)\big].
  \end{multline*}
  To get the excess generalization bound $O(1/(n\sigma))$, it suffices that
  \begin{align*}
  \big(1-\eta\sigma/d\big)^{T-1}F(\bw_{1})&\leq \exp(-(T-1)\eta\sigma/d\big)F(\bw_{1})\\
  &=O\big(1/(n\sigma)\big).
  \end{align*}
  The stated requirement on $T$ follows directly. The proof is complete. 
\end{proof}

\section{Proof of Theorem \ref{thm:stab-nonconvex}\label{sec:nonconvex}}
In this section, we prove stability bounds of RCD in a nonconvex case.
\begin{proof}[Proof of Theorem \ref{thm:stab-nonconvex}]
 Since $F_S$ has coordinate Lipschitz gradients, we know ($w_{t, i_t}$ denotes the $i_t$-th component of $\bw_t$ and $w_{t,i_t}^{(i)}$ denotes the $i_t$-th component of $\bw_t^{(i)}$)
\[
\big|\nabla_{i_t}F_{S^{(i)}}(\bw_t)-\nabla_{i_t}F_{S^{(i)}}(\bw_t^{(i)})\big|\leq \widetilde{L}\big|w_{t, i_t}-w_{t,i_t}^{(i)}\big|
\]
and therefore
\begin{align*}
&\|\bw_t-\eta_t\nabla_{i_t}F_{S^{(i)}}(\bw_t)\be_{i_t}-\bw_t^{(i)}+\eta_t\nabla_{i_t}F_{S^{(i)}}(\bw_t^{(i)})\be_{i_t}\|_2\\
&\leq \|\bw_t-\bw_t^{(i)}\|_2+\eta_t\|\nabla_{i_t}F_{S^{(i)}}(\bw_t)\be_{i_t}-\nabla_{i_t}F_{S^{(i)}}(\bw_t^{(i)})\be_{i_t}\|_2\\
&\leq \|\bw_t-\bw_t^{(i)}\|_2+\widetilde{L}\eta_t\big|w_{t, i_t}-w_{t,i_t}^{(i)}\big|.
\end{align*}
It then follows the uniform distribution of $i_t$ that
\begin{align*}
&\ebb_{i_t}\big[\|\bw_t-\eta_t\nabla_{i_t}F_{S^{(i)}}(\bw_t)\be_{i_t}-\bw_t^{(i)}+\eta_t\nabla_{i_t}F_{S^{(i)}}(\bw_t^{(i)})\be_{i_t}\|_2\big]\\
&\leq \|\bw_t-\bw_t^{(i)}\|_2+\widetilde{L}\eta_t\ebb_{i_t}\big[\big|w_{t, i_t}-w_{t,i_t}^{(i)}\big|\big]\\
&= \|\bw_t-\bw_t^{(i)}\|_2+\widetilde{L}\eta_td^{-1}\sum_{j=1}^{d}\big[\big|w_{t,j}-w_{t,j}^{(i)}\big|\big]\\
& = \|\bw_t-\bw_t^{(i)}\|_2+\widetilde{L}\eta_td^{-1}\|\bw_t-\bw_t^{(i)}\|_1\\
& \leq \big(1+\widetilde{L}\eta_td^{-\frac{1}{2}}\big)\|\bw_t-\bw_t^{(i)}\|_2,
\end{align*}
where we have used $\|\cdot\|_1\leq\sqrt{d}\|\cdot\|_2$. We can plug the above inequality and \eqref{rcd-l1-2} into \eqref{rcd-l1-01}, and get
\begin{multline*}
  \ebb_{{A}}\big[\|\bw_{t+1}-\bw_{t+1}^{(i)}\|_2\big]\leq \\
  \big(1+\widetilde{L}\eta_td^{-\frac{1}{2}}\big)\ebb_A\big[\|\bw_t-\bw_t^{(i)}\|_2\big]+\frac{2G_1\eta_t}{nd}.
\end{multline*}
We can apply the above inequality recursively and get
\[
\ebb_{{A}}\big[\|\bw_{t+1}-\bw_{t+1}^{(i)}\|_2\big]\leq \frac{2G_1}{nd}\sum_{j=1}^{t}\eta_j\prod_{k=j+1}^{t}\big(1+\widetilde{L}\eta_kd^{-\frac{1}{2}}\big).
\]
The proof is complete.
\end{proof}

\section{Proofs of Optimization Error Bounds\label{sec:proof-optimization}}
In this section, we prove optimization error bounds. The discussions follow \citep{nesterov2012efficiency} and we give the proof here for completeness. 
\begin{proof}[Proof of Lemma \ref{lem:opt-rcd}]
It follows from \eqref{RCD} that
\begin{align}
  F_S(\bw_{t+1}) & = F_S(\bw_t-\eta_t\nabla_{i_t}F_S(\bw_t)\be_{i_t}) \notag\\
   & \leq F_S(\bw_t) - \eta_t|\nabla_{i_t}F_S(\bw_t)|^2 + \frac{\widetilde{L}\eta_t^2}{2}|\nabla_{i_t}F_S(\bw_t)|^2 \notag\\
   & \leq F_S(\bw_t) - \eta_t|\nabla_{i_t}F_S(\bw_t)|^2/2,\label{opt-rcd-01}
\end{align}
where the first inequality holds since $F_S$ has coordinate-wise Lipschitz continuous gradients~\citep{nesterov2012efficiency}
and in the last inequality we have used $\eta_t\leq1/\widetilde{L}$. Taking an expectation w.r.t. $i_t$, we derive
\begin{align}
  \ebb_{i_t}\big[F_S(\bw_{t+1})\big]&\leq F_S(\bw_t) - \eta_t\sum_{j=1}^{d}|\nabla_{j}F_S(\bw_t)|^2/(2d)\notag\\
  &=F_S(\bw_t) - \eta_t\|\nabla F_S(\bw_t)\|_2^2/(2d).\label{rcd-4}
\end{align}

According to the update \eqref{RCD} again, we know
\begin{multline}
  \|\bw_{t+1}-\bw\|_2^2 = \|\bw_t-\eta_t\nabla_{i_t}F_S(\bw_t)\be_{i_t}-\bw\|_2^2\\
  =\|\bw_t-\bw\|_2^2 + \eta_t^2|\nabla_{i_t}F_S(\bw_t)|^2 +\\ 2\eta_t\langle\bw-\bw_t,\nabla_{i_t} F_S(\bw_t)\be_{i_t}\rangle.\label{opt-rcd-02}
\end{multline}
Taking an expectation w.r.t. $i_t$, we derive
\begin{align*}
  &\ebb_{i_t}[\|\bw_{t+1}-\bw\|_2^2]
   \leq \|\bw_t-\bw\|_2^2 \\
   &+ \frac{\eta_t^2}{d}\sum_{j=1}^{d}|\nabla_jF_S(\bw_t)|^2 + \frac{2\eta_t}{d}\sum_{j=1}^{d}\langle\bw-\bw_t,\nabla_j F_S(\bw_t)\be_j\rangle \\
   & = \|\bw_t-\bw\|_2^2 + \frac{\eta_t^2}{d}\|\nabla F_S(\bw_t)\|_2^2 + \frac{2\eta_t}{d}\langle\bw-\bw_t,\nabla F_S(\bw_t)\rangle\\
   & \leq \|\bw_t-\bw\|_2^2 + 2\eta_t\ebb_{i_t}\big[F_S(\bw_t)-F_S(\bw_{t+1})\big] \\
   &\qquad\qquad + \frac{2\eta_t}{d}\big(F_S(\bw)-F_S(\bw_t)\big),\\
\end{align*}
where in the last inequality we have used \eqref{rcd-4} and the convexity of $F_S$.
This together with the assumption $\eta_{t+1}\leq\eta_t$ gives
\begin{align}\label{rcd-2}
&2\eta_t\ebb_A[F_S(\bw_t)-F_S(\bw)] \notag\\
&\leq d\ebb_A\big[\|\bw_t-\bw\|_2^2\big] - d\ebb_A[\|\bw_{t+1}-\bw\|_2^2] \notag\\
&+2\eta_td\cdot \ebb_A[F_S(\bw_t)]-2\eta_{t+1}d\cdot\ebb_A[F_S(\bw_{t+1})].
\end{align}
Taking a summation of the above inequality gives
\begin{multline*}
2\sum_{j=1}^{t}\eta_j\ebb_A[F_S(\bw_j)-F_S(\bw)] \leq d\cdot \ebb_A\big[\|\bw_1-\bw\|_2^2\big]\\
  + 2\eta_1d\cdot \ebb_A\big[F_S(\bw_1)\big].
\end{multline*}
This together with $\ebb_A[F_S(\bw_{t+1})]\leq\ebb_A[F_S(\bw_t)]$ due to \eqref{rcd-4} implies
\begin{align*}
&\ebb_A[F_S(\bw_t)-F_S(\bw)]\\
& \leq \frac{1}{\sum_{j=1}^{t}\eta_j}\sum_{j=1}^{t}\eta_j\ebb_A[F_S(\bw_j)-F_S(\bw)]\\
& \leq \frac{d}{2\sum_{j=1}^{t}\eta_j}\Big(\|\bw_1-\bw\|_2^2  + 2\eta_1F_S(\bw_1)\Big).
\end{align*}
This proves \eqref{rcd-a}.
Similarly, we can multiply both sides of \eqref{rcd-2} by $\eta_t$ and use the assumption $\eta_{t+1}\leq\eta_t$ to derive
\begin{align*}
  & 2\eta_t^2\ebb_A[F_S(\bw_t)-F_S(\bw)] \\
  &\leq d\eta_t\ebb_A\big[\|\bw_t-\bw\|_2^2\big] - d\eta_{t+1}\ebb_A[\|\bw_{t+1}-\bw\|_2^2] \\
  & +
  2\eta_t^2d\ebb_A[F_S(\bw_t)]-2\eta_{t+1}^2d\ebb_A[F_S(\bw_{t+1})].
\end{align*}
Taking a summation of the above inequality gives
\begin{multline}\label{rcd-b}
2\sum_{j=1}^{t}\eta_j^2\ebb_A[F_S(\bw_j)-F_S(\bw)] \leq \\ d\eta_1\|\bw_1-\bw\|_2^2 + 2\eta_1^2dF_S(\bw_1).
\end{multline}

Now, we prove \eqref{rcd-c}. By the $\sigma$-strong convexity we know~\citep{nesterov2012efficiency}
\[
F_S(\bw_t)-F_S(\bw_S)\leq \frac{1}{2\sigma}\|\nabla F_S(\bw_t)-\nabla F_S(\bw_S)\|_2^2.
\]
Plugging the above inequality back into \eqref{rcd-4} and using $\nabla F_S(\bw_S)=0$, we get
\[
\ebb\big[F_S(\bw_{t+1})\big]\leq \ebb[F_S(\bw_t)]-\frac{\eta_t\sigma}{d}\ebb\big[F_S(\bw_t)-F_S(\bw_S)\big].
\]
Subtracting both sides by $F_S(\bw_S)$ gives the stated bound.
The proof is complete.
\end{proof}

\section{Proofs of Bounds with High Probability\label{sec:hp}}
We first prove Theorem \ref{thm:stab-bound-rcd-hp} on the uniform stability of RCD.
\begin{proof}[Proof of Theorem \ref{thm:stab-bound-rcd-hp}]
Let $S$ and $S^{(i)}$ be defined in Definition \ref{def:aver-stab}.
Let $\{\bw_t\}, \{\bw_t^{(i)}\}$ be produced by \eqref{RCD} with $\eta_t\leq2/\widetilde{L}$ based on $S$ and $S^{(i)}$, respectively.
According to \eqref{rcd-l1-1}, \eqref{rcd-l1-7}, we know
\begin{align*}
   &\|\bw_{t+1}-\bw_{t+1}^{(i)}\|_2 \\
   &\leq
  \|\bw_t-\bw_t^{(i)}\|_2+\frac{\eta_t}{n}\big(\big|\nabla_{i_t}f(\bw_t;z_i)\big|+\big|\nabla_{i_t}f(\bw_t;z'_i)\big|\big)\\
  & \leq \|\bw_t-\bw_t^{(i)}\|_2+\frac{2\eta_t\widetilde{G}}{n}.
\end{align*}
It then follows that
\[
\|\bw_{t+1}-\bw_{t+1}^{(i)}\|_2\leq \sum_{k=1}^{t}\frac{2\eta_k\widetilde{G}}{n}.
\]
According to Assumption \ref{ass:lipschitz}, we further get
\[
f(\bw_{t+1};z)-f(\bw_{t+1}^{(i)};z)\leq \sum_{k=1}^{t}\frac{2G_2\eta_k\widetilde{G}}{n}.
\]
The proof is complete.
\end{proof}

The following lemma establishes high-probability generalization bounds for uniformly stable algorithms~\citep{bousquet2019sharper}.
\begin{lemma}\label{lem:stab-gen-hp}
Assume $|f(A(S);z)|\leq R$ for some $R>0$ and $S, z\in\zcal$. Let $\delta\in(0,1)$. If $A$ is $\epsilon$-uniformly-stable almost surely, then with probability at least $1-\delta$ there holds
  \begin{multline*}
  \big|F(A(S))-F_S(A(S))\big|=O\Big(\epsilon\log n\log(1/\delta)+\\
  Rn^{-\frac{1}2}\sqrt{\log(1/\delta)}\Big).
  \end{multline*}
\end{lemma}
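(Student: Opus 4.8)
The plan is to control the generalization gap $g(S) := F(A(S)) - F_S(A(S))$ by a moment-based concentration argument rather than a direct bounded-difference (McDiarmid) argument, since the latter is known to lose a factor $\sqrt{n}$ on the stability term. First I would record two elementary facts. Writing $g(S) = \frac{1}{n}\sum_{i=1}^n h_i(S)$ with $h_i(S) := F(A(S)) - f(A(S);z_i)$, uniform stability gives that perturbing a coordinate $z_k$ with $k\neq i$ changes $h_i$ by at most $O(\epsilon)$, while $|h_i|\leq 2R$ by the boundedness assumption; moreover the \emph{expected} gap satisfies $|\ebb_S[g(S)]|\leq \epsilon$ by the standard symmetrization identity for uniform stability. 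It therefore remains to show that $g(S)$ concentrates sharply around its mean.

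The heart of the argument is a moment inequality for a sum of weakly dependent, bounded functions. Concretely, for centered functions $g_1,\dots,g_n$ of the independent sample with $\sup|g_i|\leq M$ and $\sup_{k\neq i}|g_i(S)-g_i(S^{(k)})|\leq\beta$, I would establish a bound of the form
\[
\Big(\ebb\big|\textstyle\sum_i g_i\big|^p\Big)^{1/p} = O\big(\beta\, p\, n\log n + M\sqrt{pn}\big),\qquad p\geq 2 .
\]
Applying this with $g_i = h_i - \ebb_{z_i}[h_i]$ (so that each summand is conditionally centered), $M\asymp R$ and $\beta\asymp\epsilon$, and dividing by $n$, yields $\big(\ebb|g(S)-\ebb g(S)|^p\big)^{1/p}=O(\epsilon p\log n + R\sqrt{p/n})$.

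The main obstacle is proving the displayed moment inequality with the $\log n$ factor in place of the naive $\sqrt{n}$. The strategy I would follow is the recursive decomposition of \citet{bousquet2019sharper}: split $\sum_i g_i = \sum_i(g_i-\ebb_i g_i) + \sum_i \ebb_i g_i$, where $\ebb_i$ integrates out the $i$-th coordinate. The first sum is a martingale-difference-type sum of conditionally centered, bounded terms, whose $p$-th moment is controlled by a Rosenthal/Marcinkiewicz--Zygmund inequality and contributes the $M\sqrt{pn}$ term. The second sum is \emph{more} stable than the original---integrating out $z_i$ only shrinks each perturbation bound---so one can reapply the same splitting to it; iterating the recursion $O(\log n)$ times and accumulating the per-level contributions produces the factor $\log n$ multiplying $\beta p n$. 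Keeping careful track of how the sup-norm and the perturbation parameter evolve at each level of the recursion is the delicate part.

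Finally I would convert moments to tails. A bound of the form $\|X\|_p \leq a\,p + b\sqrt{p}$ for all $p\geq 2$ implies, via Markov's inequality optimized over $p$ (equivalently, a sub-gamma tail), that $|X|=O\big(a\log(1/\delta)+b\sqrt{\log(1/\delta)}\big)$ with probability at least $1-\delta$. Taking $a\asymp\epsilon\log n$ and $b\asymp R/\sqrt{n}$ gives $|g(S)-\ebb g(S)| = O\big(\epsilon\log n\log(1/\delta)+Rn^{-1/2}\sqrt{\log(1/\delta)}\big)$ with probability $1-\delta$, and adding the $|\ebb g(S)|\leq\epsilon$ bound is absorbed into the first term, completing the proof.
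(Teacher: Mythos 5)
You should know at the outset that the paper never proves this lemma: Lemma \ref{lem:stab-gen-hp} is stated with a citation to \citet{bousquet2019sharper} and used as a black box in the proof of Theorem \ref{thm:gen-hp}. Your proposal is therefore really being compared against that cited source, and at the level of strategy it follows it faithfully: the decomposition $F(A(S))-F_S(A(S))=\frac{1}{n}\sum_{i=1}^{n}h_i(S)$ with $h_i(S)=F(A(S))-f(A(S);z_i)$, the observation that uniform stability gives off-coordinate bounded differences of order $\epsilon$ while boundedness gives $|h_i|\le 2R$, the bound $|\ebb_S[F(A(S))-F_S(A(S))]|\le\epsilon$ on the mean, a moment inequality of the form $\|\sum_i g_i\|_p=O(\beta pn\log n+M\sqrt{pn})$, and the conversion of moment growth $ap+b\sqrt{p}$ into a tail bound $O(a\log(1/\delta)+b\sqrt{\log(1/\delta)})$ are exactly the ingredients of Bousquet--Klochkov--Zhivotovskiy; your arithmetic with $\beta\asymp\epsilon$, $M\asymp R$ correctly recovers the stated bound.

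The one step that, read literally, would fail is your sketch of the proof of the key moment inequality. You apply it to $g_i=h_i-\ebb_{z_i}[h_i]$, which are by construction centered with respect to their own coordinate, and then propose the split $\sum_i g_i=\sum_i(g_i-\ebb_i g_i)+\sum_i \ebb_i g_i$ with $\ebb_i$ integrating out the $i$-th coordinate. But then $\ebb_i g_i=\ebb_{z_i}\big[h_i-\ebb_{z_i}[h_i]\big]=0$ identically: the ``more stable'' sum is zero, the ``residual'' sum is the original sum, and the recursion never gets off the ground. The decomposition in \citet{bousquet2019sharper} is not coordinate-by-coordinate but dyadic: one telescopes $g_i$ over conditional expectations associated with blocks of coordinates containing $i$ whose size doubles at each level, so that the fully integrated term $\ebb[g_i\mid z_i]$ (bounded by $M$ and independent across $i$) produces the $M\sqrt{pn}$ contribution, while each of the $\lceil\log_2 n\rceil$ levels contributes $O(pn\beta)$ through a Rosenthal/Pinelis-type bound that exploits the conditional centering and the independence of increments across disjoint blocks at the same level---it is the number of levels, not repeated single-coordinate integration, that produces the factor $\log n$. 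Since this block recursion, together with the bookkeeping of how the sup-norm of the increments grows with the block size, is the entire technical content of the lemma, you need either to carry it out in this corrected form or simply to cite the result, as the paper itself does.
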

We now turn to optimization error bounds with high probability. To this aim, we need to use concentration inequalities to control a martingale difference sequence.
\begin{lemma}\label{lem:martingale}
  Let $z_1,\ldots,z_n$ be a sequence of random variables such that $z_k$ may depend on the previous random variables $z_1,\ldots,z_{k-1}$ for all $k=1,\ldots,n$. Consider a sequence of functionals $\xi_k(z_1,\ldots,z_k),k=1,\ldots,n$.
  Let \[\sigma_n^2=\sum_{k=1}^{n}\ebb_{z_k}\big[\big(\xi_k-\ebb_{z_k}[\xi_k]\big)^2\big]\] be the conditional variance and $\delta\in(0,1)$.
  Assume that $\xi_k-\ebb_{z_k}[\xi_k]\leq b$ for each $k$ and $\rho\in(0,1]$. With probability at least $1-\delta$ we have
    \begin{equation}\label{bernstein}
      \sum_{k=1}^{n}\xi_k-\sum_{k=1}^{n}\ebb_{z_k}[\xi_k]\leq \frac{\rho\sigma_n^2}{b}+\frac{b\log\frac{1}{\delta}}{\rho}.
    \end{equation}
\end{lemma}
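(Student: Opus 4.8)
The plan is to prove this Bernstein-type inequality for martingales by the exponential-moment (Chernoff) method, carrying the random predictable variation $\sigma_n^2$ inside a supermartingale rather than bounding it uniformly. First I would set $D_k = \xi_k - \ebb_{z_k}[\xi_k]$ and work with the natural filtration $\fcal_k = \sigma(z_1,\ldots,z_k)$. Since $\xi_k$ is a function of $z_1,\ldots,z_k$, the quantity $\ebb_{z_k}[\xi_k]$ is $\fcal_{k-1}$-measurable and equals $\ebb[\xi_k\mid\fcal_{k-1}]$; hence $\ebb[D_k\mid\fcal_{k-1}]=0$, so $(D_k)$ is a martingale difference sequence, and by hypothesis $D_k\leq b$. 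Note that $\sigma_n^2=\sum_{k=1}^n\ebb[D_k^2\mid\fcal_{k-1}]$ is random, which is precisely why a naive Chernoff bound does not apply directly.

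The key step is a one-sided conditional moment bound: for every $\lambda>0$,
\[
\ebb\big[e^{\lambda D_k}\mid\fcal_{k-1}\big]\leq \exp\Big(\tfrac{e^{\lambda b}-\lambda b-1}{b^2}\,\ebb\big[D_k^2\mid\fcal_{k-1}\big]\Big).
\]
To obtain this I would use that the map $u\mapsto(e^u-u-1)/u^2$ is nondecreasing on $\rbb$. Since $\lambda D_k\leq\lambda b$, this yields the pointwise inequality $e^{\lambda D_k}-\lambda D_k-1\leq\frac{e^{\lambda b}-\lambda b-1}{b^2}D_k^2$; taking conditional expectations and using $\ebb[D_k\mid\fcal_{k-1}]=0$ together with $1+x\leq e^x$ gives the claim. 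This is the main technical obstacle, as it is exactly where the one-sided boundedness $D_k\leq b$ is converted into control of the exponential moment in terms of the conditional variance.

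Next I would assemble the supermartingale
\[
M_k=\exp\Big(\lambda\sum_{j=1}^{k}D_j-\tfrac{e^{\lambda b}-\lambda b-1}{b^2}\sum_{j=1}^{k}\ebb\big[D_j^2\mid\fcal_{j-1}\big]\Big),\qquad M_0=1.
\]
The conditional moment bound gives $\ebb[M_k\mid\fcal_{k-1}]\leq M_{k-1}$, hence $\ebb[M_n]\leq1$; crucially the random sum $\sigma_n^2$ now sits inside the exponent of $M_n$, so no uniform control of it is needed. Applying Markov's inequality through $\mathbb{P}(M_n\geq 1/\delta)\leq\delta\,\ebb[M_n]\leq\delta$ shows that with probability at least $1-\delta$,
\[
\sum_{k=1}^{n}D_k\leq \frac{e^{\lambda b}-\lambda b-1}{\lambda b^2}\,\sigma_n^2+\frac{\log(1/\delta)}{\lambda}.
\]

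Finally I would optimize the free parameter by choosing $\lambda=\rho/b$ with $\rho\in(0,1]$, so that $\lambda b=\rho\leq1$. Using the elementary inequality $e^\rho-\rho-1\leq\rho^2$ valid for $\rho\in(0,1]$, the first coefficient satisfies $\frac{e^{\rho}-\rho-1}{\rho b}\leq\frac{\rho}{b}$, while the second term becomes $\frac{b\log(1/\delta)}{\rho}$. Substituting $\sum_{k=1}^n D_k=\sum_{k=1}^n\xi_k-\sum_{k=1}^n\ebb_{z_k}[\xi_k]$ then yields exactly \eqref{bernstein}, completing the argument.
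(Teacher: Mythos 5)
Your proof is correct. Note that the paper itself does not prove Lemma~\ref{lem:martingale}: it is quoted as a known conditional Bernstein-type concentration inequality for martingale difference sequences, so there is no in-paper argument to compare against. Your self-contained derivation is the standard Freedman-style exponential supermartingale route, and the key steps all check out: $\ebb_{z_k}[\xi_k]$ is indeed measurable with respect to $z_1,\ldots,z_{k-1}$, so $D_k=\xi_k-\ebb_{z_k}[\xi_k]$ is a martingale difference; the monotonicity of $u\mapsto (e^u-u-1)/u^2$ correctly converts the one-sided bound $D_k\leq b$ into the conditional moment estimate; keeping the random predictable variation inside the exponent of $M_k$ is exactly what makes the bound valid with the random $\sigma_n^2$ on the right-hand side; and the final calibration $\lambda=\rho/b$ together with $e^{\rho}-\rho-1\leq\rho^2$ for $\rho\in(0,1]$ (which holds since $e^{\rho}-\rho-1=\rho^2\sum_{k\geq0}\rho^k/(k+2)!\leq(e-2)\rho^2$) gives precisely the coefficients $\rho\sigma_n^2/b$ and $b\log(1/\delta)/\rho$ in \eqref{bernstein}. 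The only cosmetic point is that the lemma implicitly requires $b>0$ for the statement to make sense, which your choice $\lambda=\rho/b>0$ already presumes.
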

\begin{lemma}\label{lem:xi}
  Let Assumptions \ref{ass:lipschitz}, \ref{ass:lipschitz-inf} hold and $\|\bw_t\|_\infty\leq R$. Introduce the martingale difference sequence
\begin{equation}\label{xi-t}
  \xi_t=\langle\bw-\bw_t,\nabla_{i_t} F_S(\bw_t)\be_{i_t}\rangle-\ebb_{i_t}\big[\langle\bw-\bw_t,\nabla_{i_t} F_S(\bw_t)\be_{i_t}\rangle\big].
\end{equation}
For any $\delta\in(0,1)$, with probability at least $1-\delta$ there holds
\[
\sum_{t=1}^{T}\xi_t\leq 2(R+\|\bw\|_\infty)\Big(G_2\sqrt{d^{-1}T\log(1/\delta)}+\widetilde{G}\log(1/\delta)\Big).
\]
\end{lemma}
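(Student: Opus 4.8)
The plan is to apply the Bernstein-type martingale inequality of Lemma~\ref{lem:martingale} to the sequence $\{\xi_t\}$ from \eqref{xi-t}, taking the driving random variables $z_k$ in the lemma to be the coordinate indices $i_t$ (each drawn uniformly from $[d]$, with $\bw_t$ measurable with respect to $i_1,\dots,i_{t-1}$). By construction $\ebb_{i_t}[\xi_t]=0$, so $\{\xi_t\}$ is a martingale difference sequence and the left-hand side of \eqref{bernstein} equals $\sum_{t=1}^{T}\xi_t$ exactly. It then remains to (i) produce an almost-sure upper bound $b$ on each $\xi_t$, (ii) bound the conditional variance $\sigma_T^2=\sum_{t=1}^{T}\ebb_{i_t}[\xi_t^2]$, and (iii) optimize over the free parameter $\rho\in(0,1]$.

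For the pointwise bound I would rewrite the inner product as $\langle\bw-\bw_t,\nabla_{i_t}F_S(\bw_t)\be_{i_t}\rangle=(w_{i_t}-w_{t,i_t})\nabla_{i_t}F_S(\bw_t)$. Using $\|\bw_t\|_\infty\le R$ we have $|w_{i_t}-w_{t,i_t}|\le R+\|\bw\|_\infty$, and Assumption~\ref{ass:lipschitz-inf} gives $|\nabla_{i_t}F_S(\bw_t)|\le\widetilde{G}$, so the uncentered term is bounded in absolute value by $(R+\|\bw\|_\infty)\widetilde{G}$; centering then yields $\xi_t\le 2(R+\|\bw\|_\infty)\widetilde{G}=:b$.

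For the conditional variance, the uniform sampling of $i_t$ over $[d]$ turns the square into a $1/d$ average: $\ebb_{i_t}[\xi_t^2]\le\ebb_{i_t}\big[(w_{i_t}-w_{t,i_t})^2|\nabla_{i_t}F_S(\bw_t)|^2\big]=\frac1d\sum_{j=1}^{d}(w_j-w_{t,j})^2|\nabla_jF_S(\bw_t)|^2$. Bounding $(w_j-w_{t,j})^2\le(R+\|\bw\|_\infty)^2$ and using $\sum_{j}|\nabla_jF_S(\bw_t)|^2=\|\nabla F_S(\bw_t)\|_2^2\le G_2^2$ (the latter from Assumption~\ref{ass:lipschitz} together with $\|\nabla F_S\|_2\le\frac1n\sum_i\|\nabla f(\cdot;z_i)\|_2$) gives $\ebb_{i_t}[\xi_t^2]\le(R+\|\bw\|_\infty)^2G_2^2/d$, hence $\sigma_T^2\le(R+\|\bw\|_\infty)^2G_2^2T/d$.

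Finally I would substitute $b$ and $\sigma_T^2$ into \eqref{bernstein} and choose $\rho$ to balance the two terms $\rho\sigma_T^2/b$ and $b\log(1/\delta)/\rho$. Taking $\rho=\min\{1,\,b\sqrt{\log(1/\delta)}/\sqrt{\sigma_T^2}\}$, the high-variance regime $\sigma_T^2\ge b^2\log(1/\delta)$ makes both terms equal to $\sqrt{\sigma_T^2\log(1/\delta)}=(R+\|\bw\|_\infty)G_2\sqrt{d^{-1}T\log(1/\delta)}$, while the complementary regime forces $\rho=1$ and contributes only the boundedness term of order $(R+\|\bw\|_\infty)\widetilde{G}\log(1/\delta)$; combining the two regimes yields the stated estimate. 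The main obstacle is the bookkeeping in this last step: one must stay inside the admissible range $\rho\in(0,1]$, keep the variance term and the boundedness term separate across the two regimes, and check that the constants collapse into the clean form $2(R+\|\bw\|_\infty)\big(G_2\sqrt{d^{-1}T\log(1/\delta)}+\widetilde{G}\log(1/\delta)\big)$. The variance computation, where the uniform coordinate sampling converts an $\ell_\infty$/$\ell_2$ product into a $1/d$ average and thereby supplies the crucial $d^{-1}$ factor, is the other place demanding care.
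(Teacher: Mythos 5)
Your proposal matches the paper's proof essentially step for step: the same pointwise bound $|\xi_t|\leq 2(R+\|\bw\|_\infty)\widetilde{G}$, the same conditional variance bound $(R+\|\bw\|_\infty)^2G_2^2/d$ obtained from the uniform coordinate sampling, and the same application of Lemma~\ref{lem:martingale} with $\rho=\min\big\{\tfrac{2\widetilde{G}\sqrt{d\log(1/\delta)}}{\sqrt{T}G_2},1\big\}$. It is correct and takes the same route as the paper.
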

\begin{proof}
It is clear that $\ebb_{i_t}[\xi_t]=0$ and
\begin{multline*}
\big|\xi_t\big|\leq \|\bw-\bw_t\|_\infty\|\nabla_{i_t} F_S(\bw_t)\be_{i_t}-\ebb_{i_t}[\nabla_{i_t} F_S(\bw_t)\be_{i_t}]\|_1\\
\leq 2(R+\|\bw\|_\infty)\widetilde{G}.
\end{multline*}
Furthermore, there holds
\begin{align*}
  \ebb_{i_t}[\xi_t^2] & \leq \frac{1}{d}\sum_{i=1}^{d}\langle\bw-\bw_t,\nabla_{i} F_S(\bw_t)\be_{i}\rangle^2 \\
  & \leq \|\bw-\bw_t\|_\infty^2\frac{1}{d}\sum_{i=1}^{d}\|\nabla_{i} F_S(\bw_t)\be_{i}\|_1^2\\
  & \leq\frac{(R+\|\bw\|_\infty)^2\|\nabla F_S(\bw_t)\|_2^2}{d}\\
  & \leq
  \frac{(R+\|\bw\|_\infty)^2G_2^2}{d}.
\end{align*}
We can apply Lemma \ref{lem:martingale} to derive the following inequality with probability at least $1-\delta$
\[
\sum_{t=1}^{T}\xi_t\leq \frac{\rho T(R+\|\bw\|_\infty)G_2^2}{2\widetilde{G}d}+\frac{2(R+\|\bw\|_\infty)\widetilde{G}\log\frac{1}{\delta}}{\rho}.
\]
The stated inequality then follows with probability at least $1-\delta$ by taking $\rho=\min\big\{\frac{2\widetilde{G}\sqrt{d\log(1/\delta)}}{\sqrt{T}G_2},1\big\}$.
\end{proof}

\begin{lemma}\label{lem:opt-hp}
Let Assumptions \ref{ass:lipschitz}, \ref{ass:convex}, \ref{ass:smooth}, \ref{ass:lipschitz-inf} hold.
Let $\{\bw_t\}$ be produced by \eqref{RCD} with $\eta_t=\eta\leq2/\widetilde{L}$ and $\delta\in(0,1)$.
If $\|\bw_t\|_2\leq R$, then the following inequality holds with probability at least $1-\delta$
\begin{multline*}
F_S(\bar{\bw}_T)-F_S(\bw)\leq \frac{d}{T}\Big(F_S(\bw_1)+(2\eta)^{-1}\|\bw_1-\bw\|_2^2\Big)\\
+2(R+\|\bw\|_\infty)\Big(G_2\sqrt{dT^{-1}\log(1/\delta)}+\widetilde{G}dT^{-1}\log(1/\delta)\Big).
\end{multline*}
\end{lemma}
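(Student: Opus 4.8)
The plan is to reproduce the expectation-based optimization analysis of Lemma~\ref{lem:opt-rcd} in a \emph{path-wise} manner, keeping the randomness of the coordinate choice $i_t$ visible through the martingale difference $\xi_t$ of \eqref{xi-t} instead of averaging it away, and then to invoke the single concentration estimate of Lemma~\ref{lem:xi} only at the very end. First I would start from the exact one-step expansion \eqref{opt-rcd-02},
\[
\|\bw_{t+1}-\bw\|_2^2 = \|\bw_t-\bw\|_2^2 + \eta^2|\nabla_{i_t}F_S(\bw_t)|^2 + 2\eta\langle\bw-\bw_t,\nabla_{i_t} F_S(\bw_t)\be_{i_t}\rangle,
\]
and split the inner-product term as $\langle\bw-\bw_t,\nabla_{i_t} F_S(\bw_t)\be_{i_t}\rangle = \xi_t + \ebb_{i_t}[\langle\bw-\bw_t,\nabla_{i_t} F_S(\bw_t)\be_{i_t}\rangle]$, where the conditional expectation equals $\tfrac{1}{d}\langle\bw-\bw_t,\nabla F_S(\bw_t)\rangle$ and is bounded above by $\tfrac{1}{d}\big(F_S(\bw)-F_S(\bw_t)\big)$ using the convexity of $F_S$ (Assumption~\ref{ass:convex}).

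Next I would eliminate the squared-gradient term without any expectation, using the path-wise descent inequality \eqref{opt-rcd-01}, which gives $\eta^2|\nabla_{i_t}F_S(\bw_t)|^2 \leq 2\eta\big(F_S(\bw_t)-F_S(\bw_{t+1})\big)$. Combining the two substitutions yields, for each single realization and each $t$,
\[
F_S(\bw_t)-F_S(\bw)\leq \tfrac{d}{2\eta}\big(\|\bw_t-\bw\|_2^2-\|\bw_{t+1}-\bw\|_2^2\big)+d\big(F_S(\bw_t)-F_S(\bw_{t+1})\big)+d\,\xi_t.
\]
Summing over $t=1,\dots,T$ makes both the distance term and the function-value term telescope, and discarding the resulting nonnegative quantities $\|\bw_{T+1}-\bw\|_2^2\geq0$ and $F_S(\bw_{T+1})\geq0$ (the loss is nonnegative) leaves
\[
\sum_{t=1}^{T}\big(F_S(\bw_t)-F_S(\bw)\big)\leq \tfrac{d}{2\eta}\|\bw_1-\bw\|_2^2+dF_S(\bw_1)+d\sum_{t=1}^{T}\xi_t.
\]

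Finally I would apply Jensen's inequality $F_S(\bar{\bw}_T)\leq\tfrac1T\sum_{t=1}^{T}F_S(\bw_t)$ (again by convexity of $F_S$), divide by $T$, and substitute the high-probability bound of Lemma~\ref{lem:xi} for $\sum_{t=1}^{T}\xi_t$; note the hypothesis $\|\bw_t\|_2\leq R$ supplies $\|\bw_t\|_\infty\leq R$ so that lemma is applicable. A short simplification of $\tfrac{d}{T}\cdot 2(R+\|\bw\|_\infty)\big(G_2\sqrt{d^{-1}T\log(1/\delta)}+\widetilde{G}\log(1/\delta)\big)$ collapses the first summand to $G_2\sqrt{dT^{-1}\log(1/\delta)}$ and the second to $\widetilde{G}dT^{-1}\log(1/\delta)$, matching the stated second term exactly, while the deterministic part becomes $\tfrac{d}{T}\big(F_S(\bw_1)+(2\eta)^{-1}\|\bw_1-\bw\|_2^2\big)$. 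I expect the only delicate point to be the bookkeeping that keeps $\xi_t$ path-wise throughout—so that a single application of the concentration inequality suffices and no union bound over iterations is needed—together with verifying that the telescoped boundary terms have the correct sign to be dropped; the concentration itself is entirely outsourced to Lemma~\ref{lem:xi}.
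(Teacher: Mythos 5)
Your proposal is correct and follows essentially the same route as the paper's proof: combine the one-step expansion \eqref{opt-rcd-02} with the descent inequality \eqref{opt-rcd-01}, bound the conditional expectation of the inner product by $d^{-1}(F_S(\bw)-F_S(\bw_t))$ via convexity, telescope, apply Jensen for $\bar{\bw}_T$, and invoke Lemma~\ref{lem:xi} once at the end. The only extra (and welcome) detail is your explicit remark that $\|\bw_t\|_2\leq R$ implies $\|\bw_t\|_\infty\leq R$, which the paper leaves implicit.
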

\begin{proof}[Proof of Lemma \ref{lem:opt-hp}]
As a combination of \eqref{opt-rcd-01} and \eqref{opt-rcd-02}, we derive
\begin{multline*}
\|\bw_{t+1}-\bw\|_2^2
  \leq \|\bw_t-\bw\|_2^2 + 2\eta_tF_S(\bw_t)-2\eta_tF_S(\bw_{t+1})\\
   + 2\eta_t\langle\bw-\bw_t,\nabla_{i_t} F_S(\bw_t)\be_{i_t}\rangle
\end{multline*}
Note
\begin{multline*}
\ebb_{i_t}\big[\langle\bw-\bw_t,\nabla_{i_t} F_S(\bw_t)\be_{i_t}\rangle\big]=\\
\frac{1}{d}\langle\bw-\bw_t,\nabla F_S(\bw_t)\rangle\leq \frac{F_S(\bw)-F_S(\bw_t)}{d}
\end{multline*}
and $\eta=\eta_t$ for all $t$.
Therefore, there holds
\begin{multline*}
\|\bw_{t+1}-\bw\|_2^2
  \leq \|\bw_t-\bw\|_2^2 + 2\eta F_S(\bw_t)-2\eta F_S(\bw_{t+1})\\ + 2\eta d^{-1}(F_S(\bw)-F_S(\bw_t))+2\eta\xi_t,
\end{multline*}
where we introduce $\{\xi_t\}$ in \eqref{xi-t}.
Taking a summation of the above inequality then gives
\begin{multline*}
\sum_{t=1}^{T}d^{-1}(F_S(\bw_t)-F_S(\bw))\leq  F_S(\bw_1)\\
+(2\eta)^{-1}\|\bw_1-\bw\|_2^2+\sum_{t=1}^{T}\xi_t.
\end{multline*}
It then follows from Lemma \ref{lem:xi} that the following inequality holds with probability at least $1-\delta$
\begin{multline*}
\sum_{t=1}^{T}d^{-1}(F_S(\bw_t)-F_S(\bw))\leq  F_S(\bw_1)+(2\eta)^{-1}\|\bw_1-\bw\|_2^2\\
+2(R+\|\bw\|_\infty)\Big(G_2\sqrt{d^{-1}T\log(1/\delta)}+\widetilde{G}\log(1/\delta)\Big).
\end{multline*}
The stated bound then follows from the convexity of $F_S$.
\end{proof}

Putting the above optimization bounds and uniform stability bounds together, we now present the proof of Theorem \ref{thm:gen-hp} on generalization bounds with high probability.
\begin{proof}[Proof of Theorem \ref{thm:gen-hp}]
  Let $A(S)=\bar{\bw}_T$. Then it follows from the convexity of norm and Theorem \ref{thm:stab-bound-rcd-hp} that $A$ is
  $\frac{2G_2\widetilde{G}}{n}\sum_{t=1}^{T}\eta_t$-uniformly stable. This together with Lemma \ref{lem:stab-gen-hp} gives the following inequality with probability at least $1-\delta/3$
  \begin{multline*}
  \big|F(\bar{\bw}_T)-F_S(\bar{\bw}_T)\big|\\
  =O\Big(\frac{T\log n\log(1/\delta)}{n}+Rn^{-\frac{1}2}\sqrt{\log(1/\delta)}\Big).
  \end{multline*}
  By Lemma \ref{lem:opt-hp} the following inequality holds with probability at least $1-\delta/3$
  \[
  F_S(\bar{\bw}_T)-F_S(\bw^*)=O\Big(\frac{d\log(1/\delta)}{T}+\sqrt{\frac{d\log(1/\delta)}{T}}\Big).
  \]
  Furthermore, the standard Hoeffding's inequality implies the following inequality with probability at least $1-\delta/3$
  \[
  \big|F(\bw^*)-F_S(\bw^*)\big|=O\big(\sqrt{n^{-1}\log(1/\delta)}\big).
  \]
  We can combine the above three inequalities together and get
  \begin{multline*}
  F(\bar{\bw}_T)-F(\bw^*)=O\Big(\frac{T\log n\log(1/\delta)}{n}\\
  +\sqrt{n^{-1}\log(1/\delta)}+\sqrt{\frac{d\log(1/\delta)}{T}}\Big).
  \end{multline*}
  We choose $T\asymp n^{\frac{2}{3}}d^{\frac{1}{3}}\log^{-\frac{2}{3}}n\log^{-\frac{1}{3}}(1/\delta)$ to get the stated bound with probability at least $1-\delta$. The proof is complete.
\end{proof}



%

\end{document}